\documentclass[sigconf]{acmart}
\AtBeginDocument{%
  }
\usepackage{multirow} 
\usepackage{subcaption}
\usepackage{booktabs, tabularx, adjustbox, ltablex}
\usepackage{amsmath}

\newtheorem{theorem}{Theorem}[section] 

\usepackage{float}
\usepackage{algorithm}
\usepackage{algpseudocode}
\usepackage{multirow}
\usepackage{adjustbox}
\usepackage{microtype}
\usepackage{enumitem}
\usepackage{placeins}
\setcopyright{acmlicensed}
\copyrightyear{2026}
\acmYear{2026}
\acmDOI{XXXXXXX.XXXXXXX}
\acmConference[Conference acronym 'XX]{Make sure to enter the correct
  conference title from your rights confirmation email}{June 03--05,
  2026}{Woodstock, NY}
\acmISBN{978-1-4503-XXXX-X/2026/06}

\usepackage{hyperref}
\usepackage{url}
\usepackage{hyperref}
\usepackage{url}
\usepackage{pifont}
\usepackage{multirow} 
\usepackage{multirow} 
\usepackage{fancyhdr}
\usepackage{subcaption}

\begin{document}

\title[Diff-prior]{From Uniform to Learned Graph Priors: Diffusion for Structure Discovery
}

%


\author{Qi Shao}\authornotemark[1]
\email{shaoqi@seu.edu.cn}
\orcid{0000-0002-1975-7033}
\affiliation{%
  \institution{School of Mathematics, Southeast University}
  \city{Nanjing}
  \state{Jiangsu}
  \country{China}
}
\author{Hao Guo}
\email{213232865@seu.edu.cn}
\orcid{0009-0004-4526-5691}
\affiliation{%
  \institution{School of Mathematics, Southeast University}
  \city{Nanjing}
  \state{Jiangsu}
  \country{China}
}
\authornote{Qi Shao and Hao Guo contributed equally to this work.}

\author{Jiawen Chen}
\email{jiawenchen@seu.edu.cn}
\orcid{0009-0003-0530-0387}
\affiliation{%
  \institution{School of Mathematics, Southeast University}
  \city{Nanjing}
  \state{Jiangsu}
  \country{China}
}
\author{Duxin Chen}\authornotemark[2]
\email{chendx@seu.edu.cn}
\orcid{0000-0002-3194-2258}
\affiliation{%
  \institution{School of Mathematics, Southeast University}
  \city{Nanjing}
  \state{Jiangsu}
  \country{China}
}
\author{Wenwu Yu}
\email{wwyu@seu.edu.cn}
\orcid{0000-0003-3755-179X}
\affiliation{%
  \institution{School of Mathematics, Southeast University}
  \city{Nanjing}
  \state{Jiangsu}
  \country{China}
}
\authornote{Correspondence to Duxin Chen and Wenwu Yu.}

\renewcommand{\shortauthors}{Qi Shao, Hao Guo, Jiawen Chen, Duxin Chen, and Wenwu Yu}



 \copyrightyear{2026}
\acmYear{2026}
\setcopyright{cc}
\setcctype{by-nc-nd}
\acmConference[KDD '26]{Proceedings of the 32nd ACM SIGKDD Conference on Knowledge Discovery and Data Mining V.2}{August 09--13, 2026}{Jeju Island, Republic of Korea}
\acmBooktitle{Proceedings of the 32nd ACM SIGKDD Conference on Knowledge Discovery and Data Mining V.2 (KDD '26), August 09--13, 2026, Jeju Island, Republic of Korea}
\acmDOI{10.1145/3770855.3817940}
\acmISBN{979-8-4007-2259-2/2026/08}

\renewcommand{\shortauthors}{Trovato et al.}

\begin{abstract}
Neural relational inference (NRI) methods discover interaction graphs from trajectories through variational reasoning on discrete potential edges. However, these methods typically rely on oversimplified, factorized graph priors. Such priors, typically nearing uniform distributions, treat edges as independent entities. This systemic misalignment does not match the real-world systems and yields diffuse and indecisive edge posteriors limiting the reliability of structural discovery.  
To address this, we propose \textit{Diff-prior}, a diffusion-parameterized adaptive prior used to calibrate latent graph distribution rather than generate graphs. Our core insight is to reframe prior integration as a learnable denoising-style calibration that organizes scattered, uncertain edge posteriors into a more reliable overall structure which can be trained by the diffusion model. 
Diff-prior learns an adaptive structure prior that performs structured calibration on the edge posteriors during inference, guiding it towards a distribution closer to the underlying structure. 
The diff-prior operates before structural sampling and acts as a denoising calibrator directly on the encoder edge distribution, which provides a generic training paradigm over structured variables.   
Experiments on standard benchmarks validated our framework, and the results indicate that Diff-prior improves the performance of structure inference and generates more decisive edge posteriors across multiple NRI-family architectures.
The code is available on \url{https://github.com/Hardy158118/Diffprior}.
\end{abstract}

\begin{CCSXML}
<ccs2012>
   <concept>
       <concept_id>10010147.10010257</concept_id>
       <concept_desc>Computing methodologies~Machine learning</concept_desc>
       <concept_significance>500</concept_significance>
       </concept>
   <concept>
       <concept_id>10010147.10010257.10010293.10010300.10010305</concept_id>
       <concept_desc>Computing methodologies~Latent variable models</concept_desc>
       <concept_significance>500</concept_significance>
       </concept>
 </ccs2012>
\end{CCSXML}

\ccsdesc[500]{Computing methodologies~Machine learning}
\ccsdesc[500]{Computing methodologies~Latent variable models}


\keywords{Neural Relational Inference, Structural Prior, Diffusion Model}


\maketitle

\section{Introduction}\label{}
In the observation of complex systems, dynamic information is usually obtained with the trajectories of system state, such as particle coordinates, multi-agent movements, and multivariate physiological signals. However, the underlying interaction graph structure of the system is often implicit \cite{Battaglia2016InteractionNetwork}. Neural Relational Inference aims to extract the "who influences whom" question from observational data, thereby constructing models that are not only predictive but also mechanistically interpretable \cite{kipf2018neural, webb2019factorised}.
Traditional NRI employs the classic path of "encoder inferring discrete edges, graph decoder reconstructing dynamics". However, this inference process is highly susceptible to the trap of uncertainty: Figure \ref{fig:figure1} shows the posterior edge probability distribution often exhibits excessive dispersion, leading to highly unstable reconstructed topology. This instability directly weakens the scientific persuasiveness of the model \cite{pan2024graph, manenti2024learning}.
\begin{figure}
  \centering
  \includegraphics[width=1\linewidth]{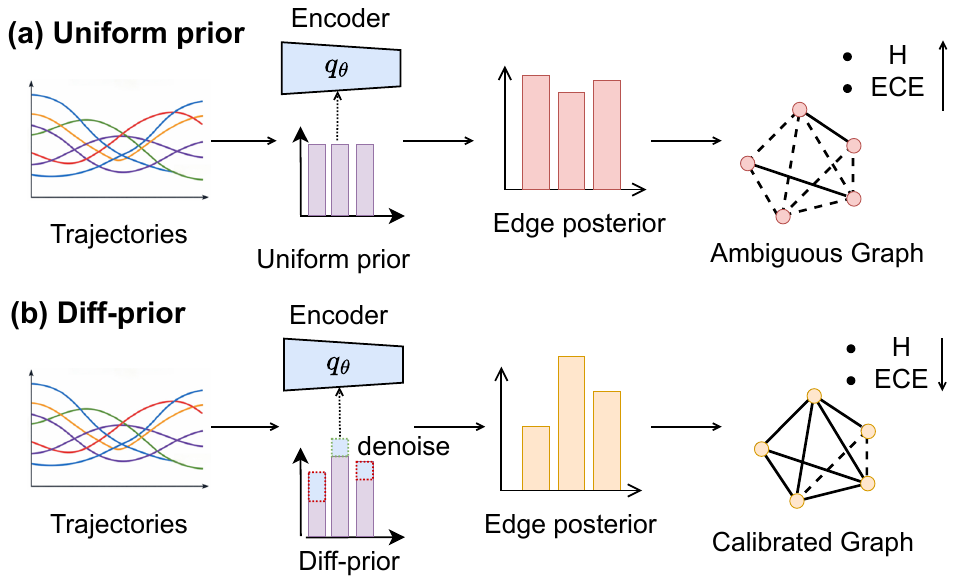}
\caption{\textbf{Diff-prior calibrates NRI graph inference.}
(a) A factorized prior can yield diffuse edge posteriors and an ambiguous graph.
(b) Diff-prior denoises and calibrates encoder logits before discretization, producing sharper edge posteriors and a more decisive graph.}
  \label{fig:figure1}
  \vspace{-20pt}
\end{figure}

The uncertainty of the prior distribution in graph structure inference cannot be entirely attributed to the optimization of discrete latent variables of the structure ~\cite{jang2017categorical,maddison2017concrete}, but rather points to deeper structural constraints in the regularization of graph uncertainty. To improve the scalability of structural distributions, NRI family methods typically employ factorized edge priors, which are usually instantiated as uniform categorical distributions. This assumption effectively deprives the model of its ability to understand graph-theoretic features such as sparsity, modularity and or motifs \cite{kipf2018neural,wang2024benchmarking}. When dynamic evidence cannot uniquely define the graph, this prior mismatch has low bias towards consistent configurations, leading to scattered and poorly calibrated posterior edge beliefs, ultimately reducing the robustness of the discovery, as quantified by expected calibration error (ECE) \cite{guo2017calibration} in Sec. ~\ref{subsec:diagnostics}.
This motivates us to learn a non-factorized structural prior that provides consistent structural bias while remaining end-to-end trainable. Unlike the independent priors commonly used in NRI-family models, our prior is defined over the full edge-logit configuration rather than as a set of independent edge-wise regularization terms.

Existing approaches to Neural Relational Inference primarily focus on dynamic modeling or integrating domain-specific constraints~\cite{wang2024benchmarking}. For instance, incorporating physical simulators or differentiable constraints can enhance predictive consistency~\cite{sanchez2018graph}. However, these methods predominantly optimize the decoder’s likelihood, prioritizing trajectory fitting over the acquisition of explicit, correlated priors at the graph structure level.
Another line of research explores the complex posterior inference mechanisms or designs dedicated time-series encoders (such as feature extractors based on Selective State-Space Models \cite{gu2024mamba}) to capture long-range dependencies or transient interaction patterns~\cite{wang2024structural, wang2024benchmarking}. Despite progress on specific tasks, these methods tend to be domain-dependent and lack a general, transferable mechanism to calibrate edge posterior beliefs and improve the stability of structure discovery.
These limitations raise a critical question: \textbf{can we learn a general structural prior that makes graph inference decisive?}

To address this gap, we propose Diff-prior, a differentiable framework that parameterizes a non-factorized structural prior via a generative diffusion process.
Departing from the conventional reliance on independent edge regularization, which inherently overlooks cross-edge dependencies, our approach pivots toward learning a prior capable of enforcing global graph consistency.
Instead of explicitly formulating an intractable distribution $p(G)$ over discrete graphs, we view the missing ingredient as a learnable calibration transformation defined on the full edge-logit tensor. Concretely, given the encoder's edge-type logits $\ell$, we seek a function $T_{\phi}$ that maps
an approximately independent logit configuration into a more coherent one, $\ell \xrightarrow{T_{\phi}} \tilde{\ell}$. The calibration target is the latent graph configuration as a whole, even though the denoising network can be implemented with a lightweight parameterization for scalability.
Hand-crafted regularizers (e.g., sparsity or degree penalties) that fail to encapsulate higher-order data-driven regularities, so $T_{\phi}$ leverages the generative power of diffusion models to autonomously recover the latent topology of the underlying system.

We reformulate the acquisition of $T_{\phi}$ as a denoising-style problem: the framework takes perturbed or structurally ambiguous representations as input and yields refined counterparts that align with learned topological motifs.
Leveraging the generative capacity of diffusion models,  
$T_{\phi}$ internalizes complex statistical dependencies among edges via iterative perturbation and recovery, it bypasses the need for manually engineered constraints.  
Importantly, our approach diverges from using diffusion as a standalone graph generator; instead, we deploy the learned denoising score as a prior-driven calibrator during inference. We then inject this prior {before} discretization.
Diff-prior operates directly on the encoder's edge-type logits as a stochastic calibrator, which nudges inference toward plausible configurations while preserving end-to-end trainability.
This logit-level interface avoids the information loss and brittleness of post-hoc corrections applied to sampled discrete graphs, and makes the module plug-and-play across NRI-style backbones. Our contributions can be summarized as follows:
\begin{itemize}
  \item \textbf{Method.} We introduce \textit{Diff-prior}, a diffusion-parameterized, {non-factorized} structural prior defined over the full edge-logit configuration. It is injected at the encoder {logit level} as a stochastic calibrator, enabling plug-and-play integration and joint end-to-end training with NRI-style dynamics models.
  \item \textbf{Mechanism.} We provide a diagnostic view of graph inference that links prior mismatch to {diffuse and miscalibrated} edge posteriors, and show that Diff-prior consistently produces {sharper and better-calibrated} edge beliefs, quantified by entropy and ECE.
  \item \textbf{Experiments.} Diff-prior demonstrates performance improvements across multiple benchmark tests using NRI-family methods, and ablation experiments confirm that these improvements stem from prior modeling rather than the number of parameters or training heuristics.
\end{itemize}

\section{Related Work}
\noindent
\textit{Latent graph inference from trajectories.}
Inferring latent interactions together with dynamics from observed trajectories is central to modeling complex systems.
Early works such as Interaction Networks model object--object interactions via message passing for dynamical prediction in physical domains \cite{Battaglia2016InteractionNetwork}.
Building on this paradigm, NRI formulates structure discovery as variational inference, where an encoder predicts discrete edge types from trajectories and a graph-conditioned decoder explains the dynamics under the latent graph \cite{kipf2018neural}.
This encoder--decoder template has become a standard backbone for relational reasoning in physical and multi-agent systems \cite{battaglia2018relational}.
Our work stays within this NRI-style inference setting, and focuses on improving how structural uncertainty is regularized through learned, correlated priors.

\textit{Improving dynamics and injecting physical knowledge.}
A large body of work enhances relational inference by strengthening the dynamics model or incorporating domain knowledge to improve prediction accuracy and physical consistency.
Graph Network-based Simulators improve generalization across system scales via particle-based message passing \cite{sanchez2020learning}, and physics-guided learning combines first-principles structure with data-driven components to improve interpretability and robustness \cite{yu2024learning}.
To better generalize across diverse physical substances, Graph-based Physics Engine  embeds material parameters as physical priors to model multiple substances across scenarios \cite{yang2021learning}.
Related efforts explore model construction for out-of-instance recovery \cite{li2022learning} and connections between learned simulators and particle-based simulation \cite{toshev2022relationships}.
While effective for dynamics prediction, these methods primarily act on the decoder or likelihood, and typically do not learn an explicit non-factorized prior over graph patterns, which is the motivation of our work.

\textit{Richer inference and temporal encoders for structure discovery.}
Beyond dynamics modeling, recent advances improve structure inference by developing richer posterior approximations or specialized temporal representations.
For example, selective state-space models with input-dependent transitions can handle irregular sampling, and the resulting temporal features can be combined with Graphical Flow Networks to approximate a posterior over structures \cite{wang2024structural}.
Dynamic NRI further extends the NRI framework to non-stationary settings by modeling time-varying relations and introducing a sequential relation prior over edges conditioned on history \cite{graber2020dynamic}.
Semi-implicit Neural ODE variants improve numerical stability and efficiency for partitioned dynamics \cite{zhang2025semi}, and multiscale modeling has been explored via diffusion autoencoders coupled with Graph Neural ODEs to capture cross-scale co-evolution \cite{Li3737087}.
Benchmarking studies further highlight that fair comparison is challenging due to domain-specific designs, and that robustness varies significantly across regimes and graph families \cite{wang2024benchmarking}.
Many of these methods still adopt factorized structural assumptions for scalability, while others focus on specialized temporal regimes; in contrast, we aim for a transferable, plug-and-play mechanism that learns a correlated structural bias to calibrate edge beliefs in NRI-style inference.

\textit{Diffusion models on discrete structures and graphs.}
Denoising diffusion models have become a powerful tool for learning complex distributions \cite{ho2020denoising,kong2023autoregressive,yang2024graphusion}.
For discrete variables, discrete diffusion formulations such as D3PM provide a principled framework for Markovian corruption and denoising beyond uniform transition kernels \cite{austin2021structured}.
In the graph domain, discrete graph diffusion models such as DiGress learn to denoise node and edge attributes under a discrete noising process \cite{vignacdigress}, and hybrid designs such as PARD combine autoregressive decomposition with diffusion-based denoising to improve efficiency while preserving permutation invariance \cite{zhao2024pard}.
Transformer-based denoisers further strengthen conditional control in graph diffusion \cite{liu2024graph}.
Most of these works study graph generation or conditional generation.
In contrast, our goal is structural inference in NRI-style models and we use diffusion to parameterize an adaptive structural prior and inject it at the encoder-logit interface to calibrate edge posteriors, enabling plug-and-play improvements across NRI-family backbones.

\section{Preliminaries}
\paragraph{Notations.}
We observe trajectories of $N$ entities over $T$ discrete time steps,
$\mathbf{X}=\{x_i(t)\in\mathbb{R}^d \mid i=1,\ldots,N;\, t=1,\ldots,T\}$, where $d$ is the state dimension.
Let $X_i=[x_i(1),\ldots,x_i(T)]$ and denote the collection by $\mathbf{X}$ (e.g., as a tensor in $\mathbb{R}^{N\times T\times d}$).

We assume a latent directed interaction graph with $K$ discrete edge types.
For each ordered pair $(i,j)\in\mathcal{E}$, we define a latent variable indicating the edge type from $i$ to $j$.
In our implementation, we use the fully connected directed set $\mathcal{E}=\{(i,j)\mid i,j\in[N]\}$, which includes self-loops; hence $|\mathcal{E}|=M=N^2$.

An encoder $f_{\mathrm{enc}}$ parameterized by $\theta_{\mathrm{enc}}$ maps $\mathbf{X}$ to edge-type logits
$\ell_{ij}=f_{\mathrm{enc}}(\mathbf{X})_{ij}\in\mathbb{R}^{K}$ for all $(i,j)\in\mathcal{E}$.
The decoder $f_{\mathrm{dec}}$ parameterized by $\theta_{\mathrm{dec}}$ models the dynamics of $\mathbf{X}$ conditioned on sampled edge $\mathcal{E}$.
\paragraph{Problem definition.}
Given trajectories $\mathbf{X}$, our goal is to infer the latent directed interaction graph by estimating the posterior of structure over edge types
\begin{equation}
q_{\theta}(Z\mid \mathbf{X}), \quad Z=\{\ell_{ij} \in \mathbb{R}^K\mid (i,j)\in\mathcal{E}\}.
\end{equation}
Here the corresponding categorical distribution is parameterized by \(\mathrm{softmax}(\ell_{ij})\), and NRI-family methods learn an encoder--decoder model by maximizing an ELBO with a graph prior $p(Z)$:
\begin{equation}
\log p_{\theta}(\mathbf{X}) \ge
\mathbb{E}_{q_{\theta}(Z\mid \mathbf{X})}\big[\log p_{\theta_{\mathrm{dec}}}(\mathbf{X}\mid Z)\big]
-\mathrm{KL}\!\left(q_{\theta}(Z\mid \mathbf{X})\,\|\,p(Z)\right),
\end{equation}
where $\log p_{\theta}(\mathbf{X})$ is the maximum likelihood of trajectory data $\mathbf{X}$, $\mathrm{KL}$ is the KL-divergence and $p(Z)$ is often used a uniform prior, which can leave many edges in a medium-confidence regime.
We therefore seek a plug-and-play adaptive prior that improves both structure inference and posterior reliability without changing the backbone encoder/decoder.

\begin{figure*}[h]
  \centering
  \includegraphics[width=1\linewidth]{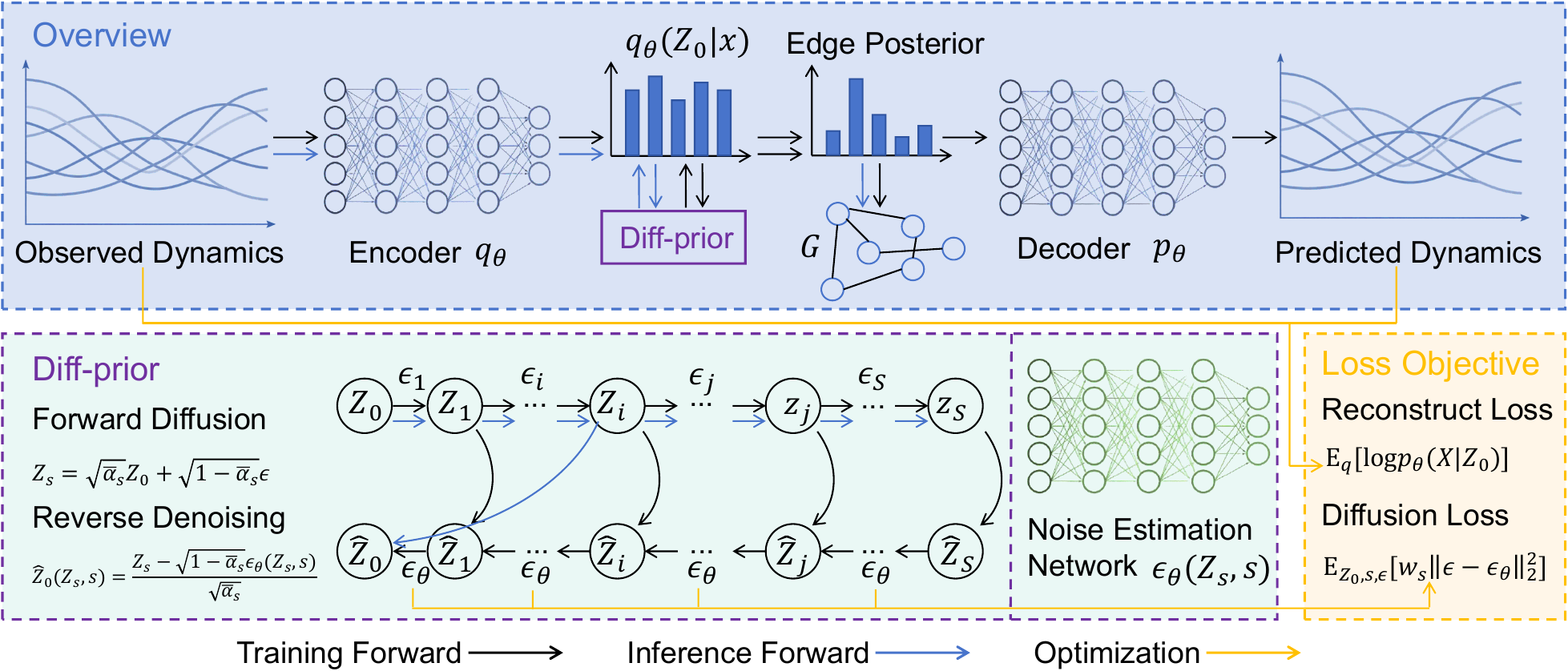}
  \vspace{-10pt}
  \caption{ Diff-prior overview and training objective. We learn a diffusion-parameterized, non-factorized prior over encoder logits $Z_0$ and inject it before discretization to calibrate NRI edge posteriors, then decode dynamics from the sampled graph. 
Training minimizes a reconstruction loss together with a diffusion denoising loss using a noise-estimation network.}
  \label{fig:2}
  \vspace{-10pt}
\end{figure*}

\section{Diffusion Prior for Neural Relational Inference}\label{sec4}
\subsection{Encoder: From data space to latent space}
At a high level, given the trajectories $\mathbf{x}=(\mathbf{x}^{1},\ldots,\mathbf{x}^{T})$, the encoder infers an approximate posterior distribution over latent interaction edge types logits $\ell_{ij}$. Since the ground-truth interaction graph is unknown, we operate on a fully connected graph and use a graph neural network (GNN) to predict the latent edge types \cite{kipf2018neural}:
\begin{equation}
\boldsymbol{\ell}=f_{\theta_{enc}}(\mathbf{x}),
\end{equation}
where $\boldsymbol{\ell}_{ij}\in\mathbb{R}^{K}$ are the logits over $K$ interaction types for edge $(i,j)$, obtained by an edge-to-node-to-edge message passing GNN encoder $f_{\theta_{enc}}$ followed by a final linear layer.

\subsection{Diffusion in latent space}

\subsubsection{Initial latent state and forward process for diffusion.}

Let $\{\ell_{ij}\in\mathbb{R}^{K}\}_{(i,j)}$ denote the edge logits produced by the encoder .
To obtain a continuous latent graph representation suitable for diffusion,
we define a Gaussianized variational posterior over the concatenated logit tensor
\begin{equation}
\label{eq:q_z0}
q_{\theta_{enc}}(Z_{0}\mid \mathbf{x})
= \mathcal{N}\!\bigl(Z_{0};\,\boldsymbol{\ell},\,\sigma_{\mathrm{enc}}^{2} I\bigr),
\qquad \boldsymbol{\ell}=f_{\theta_{enc}}(\mathbf{x}).
\end{equation}
where $\ell$ is the vector form of $\ell_{ij}$ and  $\sigma_{\mathrm{enc}}$ is a fixed, sufficiently small constant and $I$ is the identity matrix.
We employ the standard reparameterization trick,
\begin{equation}
\label{eq:reparam_z0}
Z_{0} = \boldsymbol{\ell} + \sigma_{\mathrm{enc}}\,\epsilon,\qquad \epsilon\sim\mathcal{N}(0,I).
\end{equation}
so that gradients can flow through sampling.

Based on continuous latent logits $Z_0$ , we define a discrete-time forward diffusion process over $S$ steps.
Let $\{\beta_s\}_{s=1}^{S}$ be a variance schedule, and denote $\alpha_s := 1-\beta_s$ and $\bar{\alpha}_s := \prod_{r=1}^{s}\alpha_r$.
The forward noising process is a Markov chain
{\small
\begin{equation}
\label{eq:forward_markov}
q(Z_{1:S}\mid Z_0) \;=\; \prod_{s=1}^{S} q(Z_s\mid Z_{s-1}),
q(Z_s\mid Z_{s-1}) \;=\; \mathcal{N}\!\bigl(Z_s;\sqrt{\alpha_s}\,Z_{s-1},\,\beta_s I\bigr).
\end{equation}
}
By construction, the corresponding marginal admits a closed form
\begin{equation}
\label{eq:forward_marginal}
q(Z_s\mid Z_0) \;=\; \mathcal{N}\!\bigl(Z_s;\sqrt{\bar{\alpha}_s}\,Z_0,\,(1-\bar{\alpha}_s)I\bigr),
\end{equation}
which yields the reparameterized sampling rule
\begin{equation}
\label{eq:forward_reparam}
Z_s \;=\; \sqrt{\bar{\alpha}_s}\,Z_0 + \sqrt{1-\bar{\alpha}_s}\,\epsilon,
\qquad
\epsilon \sim \mathcal{N}(0,I).
\end{equation}
Eq.~\eqref{eq:forward_reparam} enables efficient construction of noisy latents at an arbitrary diffusion step $s$ without unrolling the full chain, which we leverage throughout training and inference.\cite{ho2020denoising}

\subsubsection{Backward denoise process of the diffusion.}

We instantiate diffusion prior by a time-conditional denoiser $\epsilon_\theta(Z_s,s)$ that predicts Gaussian noise injected at the diffusion step $s$. In our implementation, we parameterize the denoiser with a lightweight
shared MLP for efficiency. The non-factorized effect comes from defining and optimizing the diffusion prior over the full edge-logit tensor before sampling, not
from explicitly modeling all pairwise edge interactions inside the denoiser.
This keeps the module scalable while moving the posterior beyond independent
edge-wise regularization toward a jointly calibrated structural state. We
also provide a Transformer-based denoiser with explicit non-factorized
modeling in Appendix~\ref{app4}.

Given $\epsilon_\theta$, one may equivalently form an estimate of the clean latent logits via
\begin{equation}
\label{eq:predict_z0}
\widehat{Z}_0(Z_s,s)
=
\frac{
Z_s - \sqrt{1-\bar{\alpha}_s}\,\epsilon_{\theta_{diff}}(Z_s,s)
}{
\sqrt{\bar{\alpha}_s}
}.
\end{equation}
Finally, we output refined logits through a residual update
\begin{equation}
\label{eq:one_step_refine}
Z_0^{\mathrm{ref}}
=
Z_0
+
\gamma\bigl(\widehat{Z}_0 - Z_0\bigr),
\qquad
\gamma\in(0,1],
\end{equation}
where $\gamma$ controls the refinement strength.
We feed $Z_0^{\mathrm{ref}}$ (rather than $Z_0$) into the decoder, which preserves an inexpensive and stable decoding interface.

\vspace{-5pt}
\subsection{Decoder: From latent space to data space}
After the encoder and the latent diffusion process, we obtain refined edge-type logits
$\tilde{\boldsymbol{\ell}}_{ij}$, stacked as $\tilde{\mathbf{Z}}_0$.
We convert them into edge-type probabilities via
$\boldsymbol{\pi}_{ij}=\mathrm{softmax}(\tilde{\boldsymbol{\ell}}_{ij})$,
and draw a discrete interaction graph $\mathbf{G}$ using the Gumbel--Softmax estimator
with temperature $\tau$, which enables end-to-end training through discrete edge samples.

Conditioned on $\mathbf{G}$, the decoder models one-step dynamics with a graph neural network and learns
\begin{equation}
p_{\theta_{\mathrm{dec}}}\!\left(\mathbf{x}^{t+1}\mid \mathbf{x}^{t},\mathbf{G}\right).
\end{equation}
We follow the NRI decoder design and employ an Interaction-Network-style message-passing architecture,
using edge-type-specific message functions to capture heterogeneous interaction mechanisms.
We use an isotropic Gaussian observation model with fixed variance, thereby turning likelihood maximization into a differentiable reconstruction term $\mathcal{L}_{\mathrm{rec}}$. The next subsection combines $\mathcal{L}_{\mathrm{rec}}$ with the regularization induced by the latent diffusion prior and derives the overall training objective.
\vspace{-8pt}
\subsection{Loss Function}
Our optimization couples the sequence model and the diffusion prior to the latent logits $Z_0$ through their shared dependency.
First, to learn the diffusion prior, we sample a timestep $s\sim\mathrm{Unif}\{1,2,\dots,S\}$ and noise $\epsilon\sim\mathcal{N}(0,I)$, construct $Z_s$ via Eq.~\eqref{eq:forward_reparam}.
Second, for decoder-facing reconstruction, we compute refined logits $Z_0^{\mathrm{ref}}$ using the single-step refinement in Eq.~\eqref{eq:one_step_refine} and feed $Z_0^{\mathrm{ref}}$ into the edge sampler and downstream decoder.
This design preserves a stable and inexpensive decoding interface by avoiding an explicit reverse diffusion chain during training and evaluation.

The overall objective is a weighted sum of the reconstruction term and the diffusion regularizer:
\begin{align}
\label{eq:total_objective}
\mathcal{L} &= \mathcal{L}_{\mathrm{rec}} \;+\; \mathcal{L}_{\mathrm{diff}}\\
&
=
\frac{1}{2\sigma^2}\,
\mathbb{E}_{q_{\theta_{\mathrm{enc}}}(Z_0\mid\mathbf{X})}
\Bigg[
\mathbb{E}_{q(\mathbf{z}\mid Z_0)}
\sum_{t=1}^{T-1}\sum_{j=1}^{N}
\bigl\|\mathbf{x}_{j}^{t+1}-\boldsymbol{\mu}_{j}^{t+1}\bigr\|_2^2
\Bigg]\\
&+
\mathbb{E}_{Z_0\sim q_{\theta_{\mathrm{enc}}}(Z_0\mid\mathbf{X})}
\;
\mathbb{E}_{s\sim \mathrm{Unif}(\{1,\ldots,S\}),\,\epsilon\sim\mathcal{N}(0,\mathbf{I})}
\Big[
w_s\,
\bigl\|\epsilon-\epsilon_{\theta_{\mathrm{diff}}}(Z_s,s)\bigr\|_2^2
\Big].
\end{align}
The concrete derivation and proof are provided in Sec.\ref{subsec:training_objective} and Appendix \ref{app:diffusion_kl_proofs}

\section{Optimization Objective}
\label{subsec:training_objective}
\subsection{Overview}
\label{sec:loss_overview}

Given the generative model specified in Sec.~\ref{sec4}, we introduce a continuous latent variable
$Z_0$ formed by concatenating the edge-type logits inferred from the observed trajectories $\mathbf{X}$.
Our model factorizes as
\begin{equation}
p(\mathbf{X},Z_0) \;=\; p_{\theta_{\mathrm{dec}}}(\mathbf{X}\mid Z_0)\, p_{\theta_{\mathrm{diff}}}(Z_0),
\end{equation}
where $p_{\theta_{\mathrm{diff}}}(Z_0)$ is an implicit prior induced by a latent-space diffusion process.
The marginal likelihood $\log p(\mathbf{X})=\log\int p_{\theta_{\mathrm{dec}}}(\mathbf{X}\mid Z_0)\,p_{\theta_{\mathrm{diff}}}(Z_0)\,dZ_0$
is intractable in general. We therefore resort to variational inference with an encoder
$q_{\theta_{\mathrm{enc}}}(Z_0\mid\mathbf{X})$ and maximize the evidence lower bound (ELBO):
\begin{equation}
\begin{aligned}
\log p(\mathbf{X}) &
\;\ge\;
\mathbb{E}_{q_{\theta_{\mathrm{enc}}}(Z_0\mid \mathbf{X})}
\big[\log p_{\theta_{\mathrm{dec}}}(\mathbf{X}\mid Z_0)\big]\\
&-
\mathrm{KL}\!\left(q_{\theta_{\mathrm{enc}}}(Z_0\mid \mathbf{X})\,\|\,p_{\theta_{\mathrm{diff}}}(Z_0)\right).
\end{aligned}
\end{equation}
Unlike the conventional NRI prior that decomposes over edges,
$p_{\theta_{\rm diff}}(Z_0)$ is defined on the concatenated latent graph
representation $Z_0$. The resulting KL term regularizes the inferred
structure at the level of the latent graph configuration, rather than
penalizing each edge posterior independently.
Equivalently, we minimize the negative ELBO:
\begin{equation}
\mathcal{L}_{\mathrm{ELBO}}(\mathbf{X})
=
\mathcal{L}_{\mathrm{rec}}(\mathbf{X})
+
\mathcal{L}_{\mathrm{diff}}^{\mathrm{marg}}(\mathbf{X}),
\end{equation}
where $\mathcal{L}_{\mathrm{rec}}(\mathbf{X})
:=
-\,\mathbb{E}_{q_{\theta_{\mathrm{enc}}}(Z_0\mid \mathbf{X})}
[\log p_{\theta_{\mathrm{dec}}}(\mathbf{X}\mid Z_0)]$
is the reconstruction term and
$\mathcal{L}_{\mathrm{diff}}^{\mathrm{marg}}(\mathbf{X})
:=
\mathrm{KL}\!\left(q_{\theta_{\mathrm{enc}}}(Z_0\mid \mathbf{X})\,\|\,p_{\theta_{\mathrm{diff}}}(Z_0)\right)$
regularizes the inferred interaction structure.

A common approach motivates diffusion priors via ESM--DSM equivalences and drops terms treated as constants \cite{yang2024graphusion}.
In a learned latent space, however, the marginal
\begin{equation}
q(Z_s)=\int q(Z_s\mid Z_0)\,q_{\theta_{diff}}(Z_0)\,dZ_0
\end{equation}
depends on the encoder and a learnable prior, so these terms are not constant under joint optimization.
The resulting intractability of the required marginal-score terms is discussed in Appendix~\ref{app:latent_score_matching}.

We therefore avoid this simplification and instead optimize a joint-KL upper bound (Thm.~\ref{thm:marginal_joint_kl}), which leads to a tractable diffusion decomposition.

\subsection{Reconstruction Loss}\label{rec_loss}
We employ a Gaussian observation model for each node and time step $
p(\mathbf{x}^{t+1}_j \mid \mathbf{x}^{t}, \mathbf{G})
=
\mathcal{N}\!\left(\boldsymbol{\mu}^{t+1}_j,\; \sigma^2 \mathbf{I}\right)$,
where $\boldsymbol{\mu}^{t+1}_j$ is produced by the GNN decoder conditioned on the graph $\mathbf{G}$,
and $\sigma^2$ is a fixed variance. Up to an additive constant, the negative log-likelihood reduces to a scaled MSE objective:
\begin{equation}
\mathcal{L}_{\mathrm{rec}}(\mathbf{X})
=
\frac{1}{2\sigma^2}\,
\mathbb{E}_{q_{\theta_{\mathrm{enc}}}(Z_0\mid\mathbf{X})}
\Bigg[
\mathbb{E}_{q(\mathbf{z}\mid Z_0)}
\sum_{t=1}^{T-1}\sum_{j=1}^{N}
\bigl\|\mathbf{x}_{j}^{t+1}-\boldsymbol{\mu}_{j}^{t+1}\bigr\|_2^2
\Bigg].
\end{equation}
In practice, we approximate the expectations using one sample of $\mathbf{z}$
(e.g., via Gumbel-Softmax) per mini-batch and report the mean squared error averaged over nodes and time steps.

\subsection{Diffusion Loss}
\label{sec:diff_reg}

Recall that our diffusion module induces an implicit prior over the continuous logit latent
$Z_0\in\mathbb{R}^{M\times K}$, denoted by $p_{\theta_{\mathrm{diff}}}(Z_0)$.
Let $S$ be the number of diffusion steps.
We consider a fixed forward noising process
$q(Z_{1:S}\mid Z_0)=\prod_{s=1}^{S} q(Z_s\mid Z_{s-1})$,
where each transition is Gaussian with a pre-defined variance schedule,
and a learned reverse process
$p_{\theta_{\mathrm{diff}}}(Z_{0:S})=p(Z_S)\prod_{s=1}^{S} p_{\theta_{\mathrm{diff}}}(Z_{s-1}\mid Z_s)$,
with $p(Z_S)=\mathcal{N}(0,\mathbf{I})$.

\paragraph{Intractable marginal KL}
The diffusion regularizer is
\begin{equation}
\mathcal{L}_{\mathrm{diff}}^{\mathrm{marg}}(\mathbf{X})
=
\mathrm{KL}\!\left(q_{\theta_{\mathrm{enc}}}(Z_0\mid\mathbf{X}) \,\|\, p_{\theta_{\mathrm{diff}}}(Z_0)\right),
\label{eq:ldiff_marginal}
\end{equation}
which is generally intractable because $p_{\theta_{\mathrm{diff}}}(Z_0)$ is defined only implicitly as the
marginal of the diffusion chain.

\paragraph{A tractable upper bound via joint augmentation.}
We introduce the augmented variational distribution
\begin{equation}
Q_{\theta_{\mathrm{enc}}}(Z_{0:S}\mid \mathbf{X})
\;:=\;
q_{\theta_{\mathrm{enc}}}(Z_0\mid\mathbf{X})\, q(Z_{1:S}\mid Z_0),
\label{eq:aug_posterior}
\end{equation}
and the diffusion joint 
\begin{equation}
P_{\theta_{\mathrm{diff}}}(Z_{0:S})
\;:=\;
p_{\theta_{\mathrm{diff}}}(Z_{0:S}).
\label{eq:diff_joint}
\end{equation}

\begin{theorem}[Marginal-to-joint KL bound]
\label{thm:marginal_joint_kl}
For any $\mathbf{X}$, the marginal KL in Eq.~\eqref{eq:ldiff_marginal} is upper bounded by
\begin{equation}
\mathrm{KL}\!\left(q_{\theta_{\mathrm{enc}}}(Z_0\mid\mathbf{X}) \,\|\, p_{\theta_{\mathrm{diff}}}(Z_0)\right)
\;\le\;
\mathrm{KL}\!\left(Q_{\theta_{\mathrm{enc}}}(Z_{0:S}\mid\mathbf{X}) \,\|\, P_{\theta_{\mathrm{diff}}}(Z_{0:S})\right).
\label{eq:ldiff_upper}
\end{equation}
\end{theorem}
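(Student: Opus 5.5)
The plan is to derive the bound from the chain rule for KL divergence, using the fact that the marginal of $P_{\theta_{\mathrm{diff}}}(Z_{0:S})$ in $Z_0$ is, by definition, $p_{\theta_{\mathrm{diff}}}(Z_0)=\int p_{\theta_{\mathrm{diff}}}(Z_{0:S})\,dZ_{1:S}$. Likewise, the marginal of $Q_{\theta_{\mathrm{enc}}}(Z_{0:S}\mid\mathbf{X})$ in $Z_0$ is $q_{\theta_{\mathrm{enc}}}(Z_0\mid\mathbf{X})$, because $q(Z_{1:S}\mid Z_0)$ integrates to one. I would first record these two marginalization identities from Eq.~\eqref{eq:aug_posterior} and Eq.~\eqref{eq:diff_joint}.

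Next, factor each joint as a $Z_0$-marginal times a conditional:
\begin{equation*}
Q(Z_{0:S}) = q_{\theta_{\mathrm{enc}}}(Z_0\mid\mathbf{X})\,q(Z_{1:S}\mid Z_0),\qquad P(Z_{0:S}) = p_{\theta_{\mathrm{diff}}}(Z_0)\,p_{\theta_{\mathrm{diff}}}(Z_{1:S}\mid Z_0).
\end{equation*}
Here the second factor is the true posterior of the reverse chain, which is well defined wherever $p_{\theta_{\mathrm{diff}}}(Z_0)>0$. This positivity holds everywhere, since all transitions are Gaussian. Substituting into $\mathbb{E}_Q[\log Q/P]$ and splitting the logarithm gives
\begin{equation*}
\mathrm{KL}(Q\,\|\,P) = \mathrm{KL}\bigl(q_{\theta_{\mathrm{enc}}}(Z_0\mid\mathbf{X})\,\|\,p_{\theta_{\mathrm{diff}}}(Z_0)\bigr) + \mathbb{E}_{q_{\theta_{\mathrm{enc}}}(Z_0\mid\mathbf{X})}\Bigl[\mathrm{KL}\bigl(q(Z_{1:S}\mid Z_0)\,\|\,p_{\theta_{\mathrm{diff}}}(Z_{1:S}\mid Z_0)\bigr)\Bigr].
\end{equation*}
The second term is an expectation of conditional KL divergences. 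Each of these is nonnegative by Gibbs' inequality (Jensen applied to $-\log$), so dropping it yields Eq.~\eqref{eq:ldiff_upper}. An equivalent route is to invoke the data-processing inequality for the projection $Z_{0:S}\mapsto Z_0$, and I would mention this as a one-line alternative.

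The main obstacle is measure-theoretic rather than algebraic. The expectation splitting must be legitimate, and this requires that the marginal KL be finite, or at least that the integrals not produce an $\infty-\infty$ form. I would handle this by noting two cases. If the right-hand side is $+\infty$, the bound is trivial. Otherwise, $Q\ll P$ implies that the $Z_0$-marginals satisfy $q\ll p$, and the chain rule then holds with both terms nonnegative and finite. The Gaussian, everywhere-positive densities used in the paper make this absolute-continuity check immediate.
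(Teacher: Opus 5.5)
Your proof is correct, but it takes a different route from the paper's. The paper proves an abstract data-processing lemma. For probability measures $Q,P$ and any measurable map $\pi$, it sets $f=dQ/dP$ and shows that $g=\mathbb{E}_P[f\mid\sigma(\pi)]$ factors as $\tilde g\circ\pi$ with $\tilde g=dQ_0/dP_0$. It then concludes by conditional Jensen for $u\log u$ and specializes to $\pi(Z_{0:S})=Z_0$. This is exactly the ``one-line alternative'' you mention, proved from scratch. You instead disintegrate both joints into a $Z_0$-marginal times a conditional and apply the chain rule. Writing $q_0:=q_{\theta_{\mathrm{enc}}}(Z_0\mid\mathbf{X})$ and $p_0:=p_{\theta_{\mathrm{diff}}}(Z_0)$, this gives the identity
\[
\mathrm{KL}(Q\|P)=\mathrm{KL}(q_0\|p_0)+\mathbb{E}_{q_0}\bigl[\mathrm{KL}\bigl(q(Z_{1:S}\mid Z_0)\,\|\,p_{\theta_{\mathrm{diff}}}(Z_{1:S}\mid Z_0)\bigr)\bigr].
\]
Your route gives an explicit slack term: the averaged diffusion-ELBO gap $\log p_{\theta_{\mathrm{diff}}}(Z_0)-\mathrm{ELBO}_{\theta_{\mathrm{diff}}}(Z_0)$, consistent with Proposition~\ref{prop:joint_kl_identity}. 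So, when the left-hand side is finite, the bound is tight exactly when the forward noising chain equals the reverse model's posterior for $q_0$-a.e.\ $Z_0$. The paper's Jensen argument delivers only the inequality and never exhibits this gap. The paper's route, in turn, buys generality: it needs no densities, no conditional distributions, and no positivity of $p_{\theta_{\mathrm{diff}}}(Z_0)$, and it absorbs the infinite cases by convention. In this setting those extra hypotheses do hold (Euclidean space, Gaussian transitions with everywhere-positive densities), so you lose nothing. The one step you should state rather than assert is the splitting of $\mathbb{E}_Q[\log(dQ/dP)]$. Because $u\log u\ge -1/e$, each of your two log-ratios has $Q$-integrable negative part, with expectation at most $1/e$. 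Hence linearity holds in $(-\infty,+\infty]$, and a finite right-hand side forces both terms to be finite. This removes the $\infty-\infty$ concern you flagged without any circularity.
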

\noindent
We denote the right-hand side of Eq.~\eqref{eq:ldiff_upper} by $\widetilde{\mathcal{L}}_{\mathrm{diff}}(\mathbf{X})$.
The inequality follows from the data-processing inequality for KL under marginalization.
The complete proof is provided in Appendix~\ref{app:proof_marginal_joint_kl}.

\paragraph{Connection to the diffusion variational objective.}
The surrogate $\widetilde{\mathcal{L}}_{\mathrm{diff}}(\mathbf{X})$ admits a standard diffusion-style decomposition
into a sum of per-step terms, each of which is tractable for Gaussian transitions:
\begin{equation}
\begin{aligned}
\widetilde{\mathcal{L}}_{\mathrm{diff}}(\mathbf{X})
&=
\mathbb{E}_{q_{\theta_{\mathrm{enc}}}(Z_0\mid\mathbf{X})}
\Big[
-\mathrm{ELBO}_{\theta_{\mathrm{diff}}}(Z_0)
\Big], \\
-\mathrm{ELBO}_{\theta_{\mathrm{diff}}}(Z_0)
&=
L_S + \sum_{s=2}^{S} L_{s-1} + L_0 .
\end{aligned}
\label{eq:ddpm_elbo_decomp}
\end{equation}
where each $L_{s}$ is a KL between Gaussians comparing the forward posterior and the learned reverse transition,and the specific form of $L_s,s \in \{ 0,1,...S\}$ is provided in Appendix \ref{app:proof_C3}.
Derivations and explicit forms are given in Appendix~\ref{app:proof_C2}--\ref{app:proof_C3}.

In the next section, we further present an efficient single-timestep Monte Carlo estimator for optimizing
$\widetilde{\mathcal{L}}_{\mathrm{diff}}(\mathbf{X})$ in practice.

\subsection{Efficient Optimization via Single-Timestep Sampling}
\label{sec:efficient_diff}

Eq.~\eqref{eq:ddpm_elbo_decomp} expresses the diffusion surrogate
as a sum of $O(S)$ per-step terms. Computing all steps for every minibatch is unnecessary.
Instead, we optimize the diffusion regularizer using a single-timestep Monte Carlo estimator,
following the standard practice in diffusion model training.

\paragraph{Single-timestep estimator.}
From Eq.~\eqref{eq:ddpm_elbo_decomp}, the intermediate sum can be rewritten as an expectation:
\begin{equation}
\sum_{s=2}^{S} L_{s-1}(Z_0)
=
(S-1)\;\mathbb{E}_{s\sim \mathrm{Unif}(\{1,\ldots,S\})}\!\left[L_{s-1}(Z_0)\right].
\end{equation}
Therefore, we estimate $-\mathrm{ELBO}_{\theta_{\mathrm{diff}}}(Z_0)$ by sampling a single timestep
$s\sim\mathrm{Unif}(\{1,\ldots,S\})$ per iteration, which reduces the diffusion-loss cost from $O(S)$ to $O(1)$.

\paragraph{Noise-prediction form.}
We adopt the common $\epsilon$-prediction parameterization for the reverse process.
The forward marginal admits the closed-form reparameterization
\begin{equation}
Z_s
=
\sqrt{\bar\alpha_s}\,Z_0
+
\sqrt{1-\bar\alpha_s}\,\epsilon,
\qquad
\epsilon\sim\mathcal{N}(0,\mathbf{I}).
\label{eq:zt_reparam}
\end{equation}
The denoiser $\epsilon_{\theta_{\mathrm{diff}}}(Z_s,s)$ predicts $\epsilon$ from $(Z_s,s)$.
As shown in Appendix~\ref{app:proof_C3}, the per-step KL term is equivalent (up to a known weighting and additive constants)
to a weighted noise regression loss. Concretely, we optimize
{\small
\begin{equation}
\mathcal{L}_{\mathrm{diff}}(\mathbf{X})
=
\mathbb{E}_{Z_0\sim q_{\theta_{\mathrm{enc}}}(Z_0\mid\mathbf{X})}
\;
\mathbb{E}_{s\sim \mathrm{Unif}(\{1,\ldots,S\}),\,\epsilon\sim\mathcal{N}(0,\mathbf{I})}
\Big[
w_s\,
\bigl\|\epsilon-\epsilon_{\theta_{\mathrm{diff}}}(Z_s,s)\bigr\|_2^2
\Big],
\label{eq:diff_train_loss}
\end{equation}
}
where
\begin{equation}
w_s \;:=\; \frac{\beta_s^2}{2\,\sigma_s^2\,\alpha_s\,(1-\bar{\alpha}_s)},
\quad
\alpha_s := 1-\beta_s,\quad \bar{\alpha}_s := \prod_{r=1}^s \alpha_r,
\end{equation}
and $\sigma_s^2$ is the reverse-process variance at step $s$.
The derivation of this weighting from the per-step Gaussian KL terms is provided in Appendix~\ref{app:proof_C4}.
This single-timestep estimator is unbiased for the corresponding expectation objective, which is proved in Appendix~\ref{app:proof_C5}.

\subsection{Overall Objective}
\label{sec:overall_objective}

Plugging the estimator in Sec.\ref{sec:efficient_diff} into the diffusion regularizer, we minimize the following
empirical objective over the training set $\mathcal{D}$:
\begin{equation}
\min_{\theta_{\mathrm{enc}},\,\theta_{\mathrm{diff}},\,\theta_{\mathrm{dec}}}
\;
\mathbb{E}_{\mathbf{X}\sim\mathcal{D}}
\Big[
\mathcal{L}_{\mathrm{rec}}(\mathbf{X})
+
\mathcal{L}_{\mathrm{diff}}(\mathbf{X})
\Big],
\label{eq:final_objective}
\end{equation}
where $\mathcal{L}_{\mathrm{rec}}$ is the Gaussian reconstruction loss in Sec.\ref{rec_loss} and
$\mathcal{L}_{\mathrm{diff}}$ is the single-timestep diffusion estimator in Eq.\ref{eq:diff_train_loss}.
Algorithm~\ref{alg:joint_train_diffvae} summarizes the optimization procedure.

\begin{algorithm}[ht]
\caption{Joint training with a latent diffusion prior.}
\label{alg:joint_train_diffvae}
\begin{algorithmic}[1]
\Require minibatch trajectories $\mathbf{x}$;
encoder $q_{\theta_{\mathrm{enc}}}$;
decoder $p_{\theta_{\mathrm{dec}}}$;
denoiser $\epsilon_{\theta_{\mathrm{diff}}}(\cdot,\cdot)$;
diffusion steps $S$ and schedule $\{\beta_s\}_{s=1}^{S}$ with $\alpha_s=1-\beta_s$, $\bar\alpha_s=\prod_{r=1}^{s}\alpha_r$;
$\sigma_{\mathrm{enc}}$ ;
refinement strength $\gamma$;

\Ensure updated parameters $(\theta_{\mathrm{enc}},\theta_{\mathrm{dec}},\theta_{\mathrm{diff}})$.

\Function{RefineOneStep}{$Z_0,\,\,\gamma,\,\epsilon$}
    \State $s_{ref} \sim Unif(1,2,...S)$
  \State $Z_{s_{\mathrm{ref}}} \gets \sqrt{\bar\alpha_{s_{\mathrm{ref}}}}\,Z_0 + \sqrt{1-\bar\alpha_{s_{\mathrm{ref}}}}\,\epsilon$
  \State $\widehat{\epsilon} \gets \epsilon_{\theta_{\mathrm{diff}}}(Z_{s_{\mathrm{ref}}},\,s_{\mathrm{ref}})$
  \State $\widehat{Z}_0 \gets \bigl(Z_{s_{\mathrm{ref}}} - \sqrt{1-\bar\alpha_{s_{\mathrm{ref}}}}\,\widehat{\epsilon}\bigr)\big/ \sqrt{\bar\alpha_{s_{\mathrm{ref}}}}$
  \State \Return $Z_0 + \gamma\bigl(\widehat{Z}_0 - Z_0\bigr)$
\EndFunction

\State $\boldsymbol{\ell} \gets f_{\theta_{\mathrm{enc}}}(\mathbf{x})$ 
\State sample $\epsilon^{ref},\epsilon_0\sim\mathcal{N}(0,I)$
\State $Z_0 \gets \boldsymbol{\ell} + \sigma_{\mathrm{enc}}\,\epsilon_0$ 
\State $Z_0^{\mathrm{ref}} \gets$ \Call{RefineOneStep}{$Z_0,\,\gamma,\,\epsilon^{ref}$}

\State sample edges $G \sim \mathrm{GumbelSoftmax}(Z_0^{\mathrm{ref}})$
\State $\mathcal{L}_{\mathrm{rec}} \gets -\log p_{\theta_{\mathrm{dec}}}(\mathbf{x}\mid G)$

\State sample $s \sim \mathrm{Unif}\{1,2,\dots,S\}$ and $\epsilon\sim\mathcal{N}(0,I)$
\State $Z_s \gets \sqrt{\bar\alpha_s}\,Z_0 + \sqrt{1-\bar\alpha_s}\,\epsilon$
\State $\widehat{\epsilon} \gets \epsilon_{\theta_{\mathrm{diff}}}(Z_s,s)$
\State ${\mathcal{L}}_\mathrm{diff}(\mathbf{X}) \gets  w_s \lVert \epsilon - \widehat{\epsilon}\rVert_2^2$

\State $\mathcal{L} \gets \mathcal{L}_{\mathrm{rec}} + {\mathcal{L}}_{diff}(\mathbf{X})$
\State update $(\theta_{\mathrm{enc}},\theta_{\mathrm{dec}},\theta_{\mathrm{diff}})$ by one optimizer step on $\nabla \mathcal{L}$
\end{algorithmic}
\end{algorithm}



\begin{table*}[htbp]
\caption{Main results (AUROC, \%). Diff-prior is plugged into different backbones across network families and dynamics. \textbf{Bold} indicates the best performance.}
\label{tab:table1}
\begin{tabular}{c|c|c|ccccc|cc}
\hline\hline
Dynamics  & Model  & Prior   & {VN\_15} & {FW\_15} & {BN\_15} & {GRN\_15} & {CRNA\_15} & {AVG.}  & {Imp.}  \\ \hline
\multirow{8}{*}{Spring} & \multirow{3}{*}{NRI} & Uniform & 95.51±0.60   & 81.09±0.02   & 99.65±0.00   & 89.94±2.41  & 83.91±0.53  & 90.02  & 0.00 \\
  &  & Fixed   & \textbf{95.71±0.47}  & 81.01±0.03   & \textbf{99.67±0.00}  & 90.22±1.99  & 84.36±0.32   &   90.19  &   +0.17  \\
  &  & Diff-prior  & 95.17±0.13   & \textbf{81.51±0.14}  & 99.52±0.02   & \textbf{90.58±1.67}   & \textbf{85.65±1.28}  & \textbf{90.49} & \textbf{+0.47}  \\ \cline{2-10} 
  & \multirow{3}{*}{ACD} & Uniform & 94.31±0.14  & 81.58±0.02  & 99.69±0.02  & 89.52±0.23   & 85.9 ±0.43  &  90.20   &  0.00   \\
  &  & Fixed   & 94.38±0.18   & 81.49±0.09  & 99.71±0.01  & 87.53±0.06  & 83.47±0.59   &  89.31   &  -0.89   \\
  &  & Diff-prior  & \textbf{95.96±0.24}   & \textbf{81.73±0.00}   & \textbf{99.76±0.00}   & \textbf{91.19±0.05}   & \textbf{90.15±0.37}  & \textbf{91.17} & \textbf{+0.97} \\ \cline{2-10} 
  & \multirow{2}{*}{MPM}  & Uniform   & 98.25±0.09  & 79.67±0.26   & \textbf{100±0.00}   & 92.31±0.28   & 85.75±0.91  &   91.20  &  0.00  \\
  &  & Diff-prior  & \textbf{98.44±0.03}  & \textbf{80.5±0.03}  & 99.99±0.01   & \textbf{93.49±0.19}  & \textbf{89.88±0.66}   & \textbf{92.46}  &  \textbf{+1.26}   \\ \hline\hline
\multirow{8}{*}{Netsim} & \multirow{3}{*}{NRI} & Uniform & 86.17±0.68   & 51.78±0.02   & \textbf{99.88±0.00}  & 77.87±0.31  & 50.53±0.11  & 73.25 & 0.00  \\
  &  & Fixed   & 88.75±7.57   & 50.60±0.00   & 99.68±0.00   & \textbf{83.7±0.28}  & 51.61±0.13   &  74.87   &   +1.62  \\
  &  & Diff-prior  & \textbf{91.5±0.33}  & \textbf{55.73±0.02}  & 99.75±0.08   & 77.38±0.60  & \textbf{51.80±0.65}  &  \textbf{75.23}   &  \textbf{+1.98}   \\ \cline{2-10} 
  & \multirow{3}{*}{ACD} & Uniform & \textbf{93.91±0.75}   & 51.79±0.03  & 99.85±0.00  & 69.85±0.21   & 51.74±0.66  &  73.43   &  0.00 \\
  &  & Fixed   & 93.62±0.48  & 50.03±0.06  & 99.83±0.01  & 69.99±0.17   & 60.70±0.13   &  74.83   &+1.40   \\
  &  & Diff-prior  & 92.15±0.52  & \textbf{52.31±0.08}   & \textbf{99.88±0.01}   & \textbf{80.98±0.63}  & \textbf{62.28±0.11}   &   \textbf{77.52}  &   \textbf{+4.09}  \\ \cline{2-10} 
  & \multirow{2}{*}{MPM} & Uniform   & 81.45±0.28  & 52.17±0.19  & \textbf{99.95±0.03}  & 70.46±0.77   & 50.28±0.06  &  70.86   &   0.00  \\
  &  & Diff-prior  & \textbf{96.65±0.32}  & \textbf{52.98±0.66}  & 99.75±0.08   & \textbf{84.51±3.66}   & \textbf{52.56±0.04}   &   \textbf{77.29}  &   \textbf{+6.43}  \\ \hline\hline
\end{tabular}
\end{table*}

\section{Experiments}

\subsection{Experiments Setup}
We evaluate our method on the public benchmarks from the StructInfer suite\footnote{\url{https://structinfer.github.io/methods/}}, and re-run all experiments to obtain consistent results under the same protocol. We operationalize the graph structural reasoning task as a binary link-inference problem. Deviating from generic data-splitting heuristics, we implement a rigorous partitioning strategy where the training, validation, and test sets are strictly disjoint and non-overlapping, ensuring no leakage of structural priors. Each dataset is standardized to $N{=}15$ nodes (e.g., \text{VN\_15}, \text{FW\_15}, and so on) and graphs are split into train/validation/test sets with samples 8000/2000/2000. Unless otherwise specified, we adopt the same observation settings as the benchmark with state dimensions $d{=}4$ for Springs (\text{SP}) and $d{=}1$ for Netsims (\text{NS}). Following the benchmark, we report AUROC for structure inference. To evaluate uncertainty quality, we also report average posterior entropy (mean Shannon entropy over edges) and ECE, computed on softmax probabilities with 10 equal-width bins. All results are averaged over five random seeds and evaluation metrics can be found in Appendix~\ref{app:met}.

Our approach is a plug-in prior for NRI-family neural structure inference models. We use \text{NRI} \cite{kipf2018neural}, \text{ACD} \cite{lowe2022amortized}, and \text{MPM} \cite{chen2021neural} as backbones, and the only modification is the prior module: we replace the default
independent edge prior with our diffusion-based prior, while keeping the
backbone architecture, training schedule, and all other hyperparameters
unchanged. For binary link inference, ``Uniform'' denotes an uninformative
Bernoulli prior with edge probability $p=0.5$, whereas ``Fixed'' denotes the
sparse fixed prior used in the original NRI implementation, with edge
probability $p=0.03$.
We do not introduce additional non-neural baselines, as our method is designed to be a drop-in component within the same neural inference pipeline. For training, we adopt the benchmark configuration: Adam optimizer with learning rate $5\times 10^{-4}$, batch size $64$, and $800$ epochs, with the learning rate halved every $200$ epochs. The diffusion denoiser is implemented as a multi-layer MLP with hidden dimensions $[128,128]$; other diffusion hyperparameters will be provided in Appendix~\ref{app4}. For robustness tests, we construct perturbed variants of the benchmark datasets: (i) \text{Noise}: We inject additive zero-mean Gaussian observation noise, i.e., $\epsilon\sim\mathcal{N}(0,\sigma^2)$ with $\sigma^2=0.1$; (ii) \text{Short-T}: we truncate the last $10\%$ timesteps of each trajectory; and (iii) \text{Missing}: at each timestep, each variable is set to zero with probability $0.1$. Experiments are conducted on three NVIDIA A6000 GPUs. Our prior does not significantly increase runtime, and runtime statistics are reported in Appendix~\ref{app:runtime}.

\begin{figure*}[h]
  \centering
  \includegraphics[height=5.5cm]{fig3.png}
\caption{Confidence distributions of edge posteriors under the {NRI} backbone across five network families and two dynamics. Each subplot shows the complementary CDF ($1-\mathrm{CDF}$) of confidence $\max(p,1-p)$; higher curves near 1 indicate a larger fraction of high-confidence predictions.}
  \label{fig:3}
\end{figure*}
\subsection{Main Results}
\label{subsec:main-results}
We evaluated the Diff-prior algorithm on the StructInfer benchmark set by applying it to multiple NRI series backbone networks (NRI, ACD, and MPM) without changing any backbone network architecture or training configuration (Table~\ref{tab:table1}).
In simulation data from both dynamics, Diff-prior improved network structure generation compared to uniform or fixed priors. Specifically, in SP dynamics, we observed stable performance improvements across all tested network structures, with average AUROC percentage values increasing by 0.47 (NRI), 0.97 (ACD), and 1.26 (MPM), respectively. In NS dynamics, the advantage of Diff-prior was significantly enhanced due to the more complex dynamic process, which made the posterior marginal probabilities more likely to become close to a uniform distribution: average AUROC values increased by 1.98 (NRI), 4.09 (ACD), and 6.43 (MPM), respectively. The improvement is particularly significant on more challenging network datasets such as FW\_15, GRN\_15, and CRNA\_15, indicating that the Diff-prior performs better on indecisive edge posteriors probabilities where uniform priors tend to produce indecisive ones.

Table~\ref{tab:table1} reveals a clear pattern in the test results across different dynamics: when the dynamic mechanism is more complex (NS compared to SP), the uniform prior is more likely to lead to a near-uniform edge posteriors, thus limiting the backbone's graph structure generation capability; while Diff-prior significantly improves this bottleneck through adaptive prior calibration.
More specifically, the significant improvement brought by Diff-prior on NS indicates that even in models with better expressive power, uniform prior can still be a major source of error, and our Diff-prior can further improve the performance of models with strong expressive power.
Meanwhile, Diff-prior also shows consistent improvement on SP, indicating that it is not only effective under extremely difficult settings, but provides a more reasonable prior form for various dynamics.
Since our modifications are strictly limited to the prior module, the above improvements can be attributed to the structural relevance and calibration capabilities provided by Diff-prior, rather than the benefits from model capacity, training budget, or hyperparameter tuning.
Therefore, Table~\ref{tab:table1} demonstrates that Diff-prior, as a general, backbone-decoupled structural prior plugin, can stabilize the performance of network structure inference models across various inference frameworks and network distributions.

\subsection{Beyond Binary Synthetic Link Inference}
We also evaluate Diff-prior beyond the standard binary synthetic
StructInfer setting. We first consider a multi-relational setting with
$K=3$ edge types and relation strengths of $0$, $0.5$, and $1$. As shown
in Table~\ref{tab:beyond_binary_real}, Diff-prior improves both Micro-F1
and Macro-F1 over the NRI backbone. This suggests that the learned prior
also supports non-binary relation calibration.
We then evaluate Diff-prior on IRMA, a real-world dataset. Despite the
small scale of IRMA ($N=5$, sequence length $7$), Diff-prior slightly
outperforms NRI, suggesting that prior calibration can transfer beyond
synthetic benchmarks.

\begin{table}[t]
\centering
\caption{Additional evaluation beyond binary synthetic link inference.}
\label{tab:beyond_binary_real}
\resizebox{\columnwidth}{!}{
\begin{tabular}{llcccc}
\hline\hline
Setting & Data & Metric & NRI & Diff-prior & Gain \\
\hline
\multirow{2}{*}{Multi-relational} 
& Synthetic, $K=3$ & Micro-F1 & 0.31 & \textbf{0.38} & +0.07 \\
& Synthetic, $K=3$ & Macro-F1 & 0.33 & \textbf{0.38} & +0.05 \\
\midrule
Real-world 
& IRMA, $N=5$ & AUROC & 59.88 & \textbf{60.24} & +0.36 \\
\hline\hline
\end{tabular}
}
\end{table}

\subsection{Posterior Diagnostics}

\label{subsec:diagnostics}

Beyond AUROC, we analyze the posterior distribution and edge confidence
induced by different priors to test whether the independent priors used in
NRI-family models (Uniform/Fixed) lead to more ambiguous structural
estimates. Fig.~\ref{fig:3} plots the confidence distribution for five
network types and two dynamics under the NRI backbone. The horizontal axis
gives the edge confidence $\max(p,1-p)$, and the vertical axis gives
$1-\mathrm{CDF}$, the fraction of edges whose confidence exceeds a given
threshold. Diff-prior is consistently shifted to the right and retains a
higher $1-\mathrm{CDF}$ in the high-confidence region, meaning that more
edges receive probabilities close to 0 or 1. By contrast, the Uniform/Fixed
curves drop earlier, indicating that independent priors leave more edges
with ambiguous posterior beliefs.

Higher confidence alone does not guarantee reliable inference, since
overconfident predictions can still be poorly calibrated. We therefore
measure posterior quality using two complementary metrics: the average
Shannon entropy of edge posteriors, where lower values indicate more
concentrated beliefs, and ECE with 10 confidence bins, where lower values
indicate better calibration. As shown in Table~\ref{tab:table2}, Diff-prior
reduces both entropy and ECE in most network/dynamics combinations. On
NS\_VN, entropy decreases from 0.0281/0.0268 to 0.0011 and ECE from
0.0226/0.0189 to 0.0007; on NS\_GRN, entropy decreases from 0.0374 under
Uniform to 0.0052 and ECE from 0.0296 to 0.0041. The same pattern appears
under SP dynamics: on SP\_GRN, entropy decreases from 0.0219/0.0222 to
0.0168 and ECE from 0.0209/0.0189 to 0.0121; on SP\_FW, entropy decreases
from 0.0380/0.0361 to 0.0337 and ECE from 0.0241/0.0253 to 0.0239.

To check whether this lower entropy reflects overconfident sharpening, we
evaluate calibration on the 10\% lowest-entropy test samples, which
correspond to the most confident posterior predictions.
Table~\ref{tab:low_entropy_calibration} reports ECE and NLL under the NRI
backbone. In this low-entropy regime, Diff-prior reduces average ECE from
0.030 to 0.022 and average NLL from 4.691 to 4.378. Although the improvement
is not uniform across every dataset, the overall trend suggests that
Diff-prior remains more reliable even among highly confident predictions.
Together, Fig.~\ref{fig:3}, Table~\ref{tab:table2}, and
Table~\ref{tab:low_entropy_calibration} support the view of Diff-prior as a
posterior calibration mechanism: it turns diffuse edge beliefs into more
decisive posteriors while preserving, and often improving, calibration
quality.

\begin{table}[t]
\centering
\setlength{\tabcolsep}{4pt} 
\caption{Posterior diagnostics and calibration under \text{NRI}: average entropy and ECE (10 bins) of edge posteriors. \textbf{Bold} indicates the best (lowest) value.}
\label{tab:table2}
\renewcommand{\arraystretch}{0.95} 
\begin{adjustbox}{max width=\linewidth}
\begin{tabular}{c|ccc|ccc}
\hline\hline
\multirow{2}{*}{Networks} & \multicolumn{3}{c|}{Entropy}   & \multicolumn{3}{c}{ECE}  \\ \cline{2-7} 
  & Uniform & Fixed  & Diff-prior & Uniform & Fixed   & Diff-prior   \\ \hline
SP\_VN  & 0.0055  & 0.0055 & \textbf{0.0029}   & 0.0033  & 0.0041  & \textbf{0.0023}  \\
SP\_FW  & 0.038   & 0.0361 & \textbf{0.0337}  & 0.0241  & 0.0253 & \textbf{0.0239} \\
SP\_BN  & 0.0018  & 0.0015 & \textbf{0.0013}   & 0.0013  & 0.0008  & \textbf{0.0007}  \\
SP\_GRN   & 0.0219  & 0.0222 & \textbf{0.0168}   & 0.0209  & 0.0189  & \textbf{0.0121}  \\
SP\_CRNA  & 0.0234  & 0.0246 & \textbf{0.0222}   & 0.0204  & 0.0172  & \textbf{0.0153}  \\
NS\_VN  & 0.0281  & 0.0268 & \textbf{0.0011}   & 0.0226  & 0.0189  & \textbf{0.0007}  \\
NS\_FW  & 0.0274  & 0.0213 & \textbf{0.0025}   & 0.0361  & 0.0036  & \textbf{0.0027}  \\
NS\_BN  & 0.0024  & 0.0043 & \textbf{0.0003}   & 0.0023  & 0.0047  & \textbf{0.0002}  \\
NS\_GRN   & 0.0374  & 0.0117 & \textbf{0.0052}   & 0.0296  & 0.0105  & \textbf{0.0041}  \\
NS\_CRNA  & 0.0337  & 0.0461 & \textbf{0.0303}   & 0.0239  & 0.0241  & \textbf{0.0161}  \\
\hline\hline
\end{tabular}
\end{adjustbox}
\end{table}
\begin{table}[t]
\centering
\caption{Calibration on the 10\% lowest-entropy test samples under the NRI backbone. Lower ECE and NLL indicate better calibrated and more reliable confident predictions.}
\label{tab:low_entropy_calibration}
\resizebox{\columnwidth}{!}{
\begin{tabular}{llcccc}
\hline\hline
Network & ECE (Uni.) & ECE (Diff.) & NLL (Uni.) & NLL (Diff.) \\
\hline
SP\_FW   & 0.177 & \textbf{0.171} & 7.637 & \textbf{7.610} \\
SP\_GRN  & 0.012 & \textbf{0.005} & 3.750 & \textbf{3.040} \\
SP\_CRNA & 0.012 & \textbf{0.009} & 5.380 & \textbf{4.470} \\
SP\_VN   & 0.001 & \textbf{0.000} & 1.620 & \textbf{1.580} \\
SP\_BN   & \textbf{0.000} & \textbf{0.000} & \textbf{0.320} & 0.420 \\
\midrule
NS\_FW   & \textbf{0.000} & \textbf{0.000} & \textbf{14.630} & NS\_14.690 \\
NS\_GRN  & 0.014 & \textbf{0.001} & 5.430 & \textbf{4.460} \\
NS\_CRNA & \textbf{0.000} & \textbf{0.000} & 7.180 & \textbf{6.570} \\
NS\_VN   & 0.080 & \textbf{0.035} & 0.620 & \textbf{0.460} \\
NS\_BN   & \textbf{0.000} & \textbf{0.000} & \textbf{0.340} & 0.480 \\
\hline\hline
\end{tabular}
}
\end{table}
\subsection{Higher-Order Structural Pattern Recovery}

The previous diagnostics focus on edge-level posterior confidence and
calibration. We next examine whether Diff-prior also improves graph-level
structural recovery. We compare the inferred graphs with the ground-truth
graphs using higher-order structural statistics: density ($D$), clustering
coefficient ($C$), modularity ($M$), average degree ($AD$), and triads
($T$). For each statistic, we report the absolute deviation between the
inferred and ground-truth graphs. Smaller values indicate better recovery of
the corresponding structural pattern.

Table~\ref{tab:motif_recovery} shows that Diff-prior reduces the deviation
in most metric-network pairs. The gains are clearest on GRN-NS and VN-NS,
where Diff-prior improves all reported statistics. On CRNA-NS, it improves
density, clustering coefficient, modularity, and average degree, although
the triad deviation increases. FW-NS shows a more mixed pattern: Diff-prior
improves clustering coefficient and modularity but not every statistic.
These results indicate that the learned diffusion prior captures
graph-level regularities beyond independent edge confidence, although
higher-order motif recovery remains dataset-dependent.

\begin{table}[t]
\centering
\caption{Higher-order structural pattern recovery under the NRI backbone.
Each entry reports the absolute deviation from the ground-truth graph as
Uniform/Diff-prior; lower is better.}
\label{tab:motif_recovery}
\resizebox{\columnwidth}{!}{
\begin{tabular}{lcccc}
\hline
Metric & CRNA-NS & FW-NS & GRN-NS & VN-NS \\
\midrule
$\Delta D$  & 0.32/\textbf{0.21} & \textbf{0.23}/0.32 & 0.11/\textbf{0.05} & 0.15/\textbf{0.01} \\
$\Delta C$  & 0.39/\textbf{0.18} & 0.28/\textbf{0.27} & 0.06/\textbf{0.00} & 0.44/\textbf{0.12} \\
$\Delta M$  & 0.24/\textbf{0.23} & 0.06/\textbf{0.05} & 0.03/\textbf{0.02} & 0.28/\textbf{0.13} \\
$\Delta AD$ & 4.53/\textbf{2.91} & \textbf{3.26}/4.41 & 1.60/\textbf{0.67} & 2.07/\textbf{0.15} \\
$\Delta T$  & \textbf{18.00}/170.68 & \textbf{29.09}/58.52 & 36.23/\textbf{24.43} & 22.30/\textbf{1.50} \\
\hline
\end{tabular}
}
\end{table}

\subsection{Robustness tests}
To verify whether Diff-prior is more effective under more challenging conditions, we constructed three types of robustness tests on CRNA: Noise (additive noise with $\epsilon\sim\mathcal{N}(0,\sigma^2)$ with $\sigma^2=0.1$), Short-T (truncating the last 10\% of time steps), and Missing (setting the time to zero with a 10\% probability at each step), and evaluated them under $N=15$ and $N=30$ respectively (Table~\ref{tab:table3}).
These perturbations systematically weaken the trajectory information, making the uniform prior model more prone to generating a confused posterior distribution, thus better testing the effect of the relevant prior.
The results show that Diff-prior achieves a positive improvement over Uniform in Entropy, ECE reduction, and AUROC increase under all robustness tests.
Specifically, taking NRI\_CRNA\_15 as an example, under Noise, Diff-prior improves Entropy from 0.0274 to 0.0265, ECE from 0.0234 to 0.0178, and AUROC from 54.33 to 55.41. This demonstrates the noise immunity of diff-prior.
In Short-T, Entropy improves from 0.0282 to 0.0271, ECE from 0.0152 to 0.0128, and AUROC from 54.56 to 55.06, indicating that Diff-prior still achieves improved performance even with shorter trajectories.
In the Missing test, Entropy improved from 0.0481 to 0.0432, ECE from 0.0358 to 0.0341, and AUROC from 58.58 to 59.44, indicating that the diff-prior is also more robust in missing scenarios.
Consistent conclusions were observed for NRI\_CRNA\_30 dataset with nodes num increase to 30. Robustness testing further demonstrates the effectiveness of our proposed Diff-prior. The Diff-prior can alleviate the near-uniform posterior bottleneck when information is insufficient and achieves improved performance in complex scenarios.

\subsection{Ablation studies}
\label{subsec:ablations}
{\small
\begin{table}[t]
\setlength{\tabcolsep}{2pt}
\caption{Robustness tests on \text{CRNA} under \text{NRI}: \text{Noise}, \text{Short-T}, and \text{Missing} for $N{=}15/30$. We report entropy, ECE, and AUROC. \textbf{Bold} indicates the best result.}
\label{tab:table3}
\begin{tabular}{cccccccc}
\hline
\hline
\multicolumn{2}{c|}{\multirow{2}{*}{Prior}}  & \multicolumn{3}{c|}{NRI\_CRNA\_15}   & \multicolumn{3}{c}{NRI\_CRNA\_30}  \\ \cline{3-8} 
\multicolumn{2}{c|}{}  & Entropy  & ECE  & \multicolumn{1}{c|}{AUROC}   & Entropy  & ECE  & AUROC  \\ \hline
\multicolumn{1}{c|}{\multirow{2}{*}{Noise}}   & \multicolumn{1}{c|}{Uniform} & 0.0274   & 0.0234   & \multicolumn{1}{c|}{54.33} & 0.1010   & 0.3668   &  59.36  \\
\multicolumn{1}{c|}{}   & \multicolumn{1}{c|}{Diff-prior}  & \textbf{0.0265}   & \textbf{0.0178}   & \multicolumn{1}{c|}{\textbf{55.41}} & \textbf{0.0988}   & \textbf{0.3642}   & \textbf{59.53}   \\ \hline
\multicolumn{1}{c|}{\multirow{2}{*}{Short-T}} & \multicolumn{1}{c|}{Uniform} & 0.0282   & 0.0152   & \multicolumn{1}{c|}{54.56} & 0.1135   & 0.4383  & 50.36  \\
\multicolumn{1}{c|}{}   & \multicolumn{1}{c|}{Diff-prior}  & \textbf{0.0271}   & \textbf{0.0128}   & \multicolumn{1}{c|}{\textbf{55.06}} & \textbf{0.1034}   & \textbf{0.4322}   & \textbf{50.45}   \\ \hline
\multicolumn{1}{c|}{\multirow{2}{*}{Missing}} & \multicolumn{1}{c|}{Uniform} & 0.0481   & 0.0358   & \multicolumn{1}{c|}{58.58} & 0.1472   & 0.3889   & 53.988   \\
\multicolumn{1}{c|}{}   & \multicolumn{1}{c|}{Diff-prior}  & \textbf{0.0432}   & \textbf{0.0341}   & \multicolumn{1}{c|}{\textbf{59.44}} & \textbf{0.1429}   & \textbf{0.3839}   & \textbf{53.989}   \\
\hline
\hline
\end{tabular}
\end{table}}

{\small
\begin{table}[t]
\setlength{\tabcolsep}{2pt}
\caption{Ablation and controls on \text{FW\_NRI\_SP} and \text{GRN\_NRI\_SP}: removing $\gamma$ and replacing Diff-prior with a parameter-matched MLP prior. \textbf{Bold} is the best result.}
\label{tab:table4}
\begin{tabular}{lccccccl}
\hline
\hline
\multicolumn{1}{c|}{\multirow{2}{*}{Prior Variant}}   & \multicolumn{3}{c}{FW\_NRI\_SP}  & \multicolumn{3}{c}{GRN\_NRI\_SP}   &  \\ \cline{2-7}
\multicolumn{1}{l|}{}  & Entropy  & ECE  & AUROC  & Entropy  & ECE  & AUROC  &  \\ \cline{1-7}
\multicolumn{1}{c|}{Uniform}   & 0.038  & 0.0241   & 81.09  & 0.0219   & 0.0209   & 89.94  &  \\
\multicolumn{1}{c|}{Fixed}   & 0.0361   & 0.0253   & 81.01  & 0.0222   & 0.0189   & 90.22  &  \\
\multicolumn{1}{c|}{Diff-prior}  & \textbf{0.0337}  & \textbf{0.0239}  & \textbf{81.51}   & \textbf{0.0168}  & \textbf{0.0121}  & \textbf{90.58}   &  \\
\multicolumn{1}{c|}{Diff-prior w/o $\gamma$} & 0.0763   & 0.2173   & 80.91  & 0.0368   & 0.1641   & 87.53  &  \\
\multicolumn{1}{c|}{PM-MLP Prior}   & 0.113  & 0.1975   & 80.84  & 0.0554   & 0.1008   & 87.49  &  \\
   \hline
   \hline
\end{tabular}
\end{table}}
We use ablations on FW\_NRI\_SP and GRN\_NRI\_SP to identify where the gains
of Diff-prior come from (Table~\ref{tab:table4}). The full model obtains
Entropy/ECE/AUROC values of 0.0337/0.0239/81.51 on FW and
0.0168/0.0121/90.58 on GRN.

Removing the residual calibration intensity $\gamma$ (Diff-prior w/o
$\gamma$) sharply worsens calibration and also lowers AUROC. On FW, ECE
increases from 0.0239 to 0.2173, while AUROC drops from 81.51 to 80.91. On
GRN, ECE increases from 0.0121 to 0.1641, while AUROC drops from 90.58 to
87.53. Thus, the $\gamma$-controlled calibration step is essential to the
diffusion prior.

We also replace the Diff-prior module (PM-MLP prior) with a
parameter-matched MLP to control for model size. This variant again performs
worse: ECE/AUROC becomes 0.1975/80.84 on FW and 0.1008/87.49 on GRN. These
results show that the improvement does not come from extra parameters or an
arbitrary adaptive prior. It comes from the diffusion-based prior and its
calibration mechanism.

\section{Discussion}
Our experiments consistently show that in NRI-family models, commonly used independent priors (Uniform/Fixed) cause some edge posteriors to remain in a near-moderate region, thus limiting structural recovery performance and posterior reliability.
Diff-prior constructs an adaptive prior distribution through diffusion denoising, which can be understood as a learnable posterior calibration mechanism, so that the model can obtain a more deterministic and accurate edge posteriors distribution without changing the backbone and training configuration.
The main results in Table~\ref{tab:table1} are consistent with this mechanism: Diff-prior delivers improvements on all three backbones (NRI/ACD/MPM) and two types of dynamics (SP/NS), with the gain being more significant on NS.
Posterior diagnostics (Fig.~\ref{fig:3} and Table~\ref{tab:table2}) provide direct evidence: the confidence distribution of Diff-prior shows more edges falling in the high-confidence interval, while Entropy and ECE decrease synchronously in most settings, indicating that its improvement is not simply due to overconfidence.
Robustness testing (Table~\ref{tab:table3}) further validates robustness: under noise, short sequences, and missing observation perturbations, Diff-prior maintains a consistent trend of decreasing Entropy and ECE/increasing AUROC, showing the same conclusions at both N=15 and N=30 scales.

As shown in Table~\ref{tab:table4}, the ablation study further confirmed the above conclusions. When we removed the key calibration intensity $\gamma$, the ECE deteriorated significantly and the AUROC decreased, indicating that this calibration step is crucial to the effectiveness of Diff-prior.
We replaced the Diff prior with a parameter-matched MLP prior, but it failed to reproduce the performance and calibration gain, indicating that the improvement did not come from additional parameters or arbitrary adaptive priors, instead, it relied on prior enhancements caused by diffusion denoising.
We have systematically validated this on multiple network families, two types of dynamics, and two scales ($N=15/30$), current experiments still mainly focus on the StructInfer baseline setting and binary edge inference task. Further expansion to a wider range of data and relational forms is a meaningful direction.
Our diagnostics mainly revolve around AUROC, entropy, and ECE; future work could supplement this with more fine-grained reliability analysis to more comprehensively characterize posterior quality.
This work demonstrates that using diffusion models to construct pluggable relevant structural priors can simultaneously improve structure discovery performance and posterior reliability across various NRI-family inference frameworks.

\section{Acknowledgments}
This work is supported by the National Key R{\&}D Program of China under Grant No. 2022ZD0120004, the Zhishan Youth Scholar Program, the National Natural Science Foundation of China under Grant Nos. 62233004, 62273090, and the Jiangsu Provincial Scientific Research Center of Applied Mathematics under Grant No. BK20233002.

\newpage

\bibliographystyle{unsrt}
\bibliography{sample-base}

%

\appendix

\section{Why Use Joint Distribution Optimization}
\label{app:latent_score_matching}

This appendix explains why simplifications that drop terms treated as constants (e.g., $C_1,C_2$)
in ESM--DSM style derivations can be invalid when the encoder is trained jointly with a latent diffusion prior.
The discussion follows the key observation highlighted by Graphusion for latent diffusion models \cite{yang2024graphusion}.

\subsection{ESM--DSM connection and the origin of constant terms}
\label{app:esm_dsm_origin}

Let $q(Z_t\mid Z_0)$ be a forward noising process and define the induced marginal
\begin{equation}
q_{\theta_{\mathrm{enc}}}(Z_t)
=
\int q(Z_t\mid Z_0)\, q_{\theta_{\mathrm{enc}}}(Z_0)\, dZ_0,
\label{eq:appA_qt_marginal}
\end{equation}
where $q_{\theta_{\mathrm{enc}}}(Z_0)$ denotes the (possibly learned) latent data distribution induced by the encoder
(e.g., an aggregated encoder posterior).
Consider explicit score matching (ESM) at noise level $t$:
\begin{equation}
\mathcal{J}_{\mathrm{ESM}}(\theta_{\mathrm{diff}};\theta_{\mathrm{enc}})
=
\mathbb{E}_{t}\;
\mathbb{E}_{Z_t\sim q_{\theta_{\mathrm{enc}}}(Z_t)}
\Bigl\|
s_{\theta_{\mathrm{diff}}}(Z_t,t) - \nabla_{Z_t}\log q_{\theta_{\mathrm{enc}}}(Z_t)
\Bigr\|_2^2,
\label{eq:appA_esm}
\end{equation}
where $s_{\theta_{\mathrm{diff}}}(Z_t,t)$ is the learned score network.

Expanding Eq.~\eqref{eq:appA_esm} gives
\begin{align}
\mathcal{J}_{\mathrm{ESM}}(\theta_{\mathrm{diff}};\theta_{\mathrm{enc}})
&=
\mathbb{E}_{t}\;
\mathbb{E}_{q_{\theta_{\mathrm{enc}}}(Z_t)}
\bigl\| s_{\theta_{\mathrm{diff}}}(Z_t,t)\bigr\|_2^2
\nonumber\\
&\quad
-2\,\mathbb{E}_{t}\;
\mathbb{E}_{q_{\theta_{\mathrm{enc}}}(Z_t)}
\Bigl\langle s_{\theta_{\mathrm{diff}}}(Z_t,t), \nabla_{Z_t}\log q_{\theta_{\mathrm{enc}}}(Z_t)\Bigr\rangle
\\
&\quad+ C_1(\theta_{\mathrm{enc}}),
\label{eq:appA_esm_expand}
\end{align}
where
\begin{equation}
C_1(\theta_{\mathrm{enc}})
:=
\mathbb{E}_{t}\;
\mathbb{E}_{q_{\theta_{\mathrm{enc}}}(Z_t)}
\bigl\|\nabla_{Z_t}\log q_{\theta_{\mathrm{enc}}}(Z_t)\bigr\|_2^2.
\label{eq:appA_c1_def}
\end{equation}
The term $C_1(\theta_{\mathrm{enc}})$ does not depend on $\theta_{\mathrm{diff}}$, but it generally depends on
the marginal distribution $q_{\theta_{\mathrm{enc}}}(Z_t)$.
Under standard regularity conditions, integration-by-parts arguments can further eliminate
$\nabla\log q_{\theta_{\mathrm{enc}}}(Z_t)$ from the cross term, leading to implicit score matching and denoising variants.
In such equivalences, additional terms (often grouped as $C_2$) may appear and are sometimes informally described as constants.

\subsection{Learned-latent diffusion makes the marginal parameter-dependent}
\label{app:latent_marginal_dep}

In learned latent diffusion, $q_{\theta_{\mathrm{enc}}}(Z_0)$ is induced by the encoder and thus varies with
$\theta_{\mathrm{enc}}$. Consequently, the marginal $q_{\theta_{\mathrm{enc}}}(Z_t)$ in Eq.~\eqref{eq:appA_qt_marginal}
also varies with $\theta_{\mathrm{enc}}$ through the marginalization integral. This dependence is exactly what makes
the marginal score $\nabla_{Z_t}\log q_{\theta_{\mathrm{enc}}}(Z_t)$ difficult to access analytically in latent space.
Graphusion discusses this issue explicitly in latent diffusion settings \cite{yang2024graphusion}.

\subsection{Why $C_1,C_2$ are not constant under joint optimization}
\label{app:constants_not_constant}

If one optimizes only the diffusion/score parameters $\theta_{\mathrm{diff}}$ while treating
$q_{\theta_{\mathrm{enc}}}(Z_0)$ as fixed (e.g., a strictly two-stage pipeline where the encoder is frozen),
then $C_1(\theta_{\mathrm{enc}})$ and related terms can indeed be treated as constants {with respect to}
$\theta_{\mathrm{diff}}$ and dropped.

However, under joint training of $(\theta_{\mathrm{diff}},\theta_{\mathrm{enc}})$, these terms generally contribute
non-trivially to the gradient with respect to $\theta_{\mathrm{enc}}$. In particular, differentiating
Eq.~\eqref{eq:appA_esm_expand} yields the decomposition
{\small
\begin{align}
&\nabla_{\theta_{\mathrm{enc}}}\mathcal{J}_{\mathrm{ESM}}(\theta_{\mathrm{diff}};\theta_{\mathrm{enc}})
=
\nabla_{\theta_{\mathrm{enc}}}
\Bigg(
\mathbb{E}_{t}\;
\mathbb{E}_{q_{\theta_{\mathrm{enc}}}(Z_t)}
\bigl\| s_{\theta_{\mathrm{diff}}}(Z_t,t)\bigr\|_2^2
\Bigg)
\nonumber\\
&\quad
-2\,\nabla_{\theta_{\mathrm{enc}}}
\Bigg(
\mathbb{E}_{t}\;
\mathbb{E}_{q_{\theta_{\mathrm{enc}}}(Z_t)}
\Bigl\langle s_{\theta_{\mathrm{diff}}}(Z_t,t), \nabla_{Z_t}\log q_{\theta_{\mathrm{enc}}}(Z_t)\Bigr\rangle
\Bigg)
\nonumber
+\nabla_{\theta_{\mathrm{enc}}} C_1(\theta_{\mathrm{enc}}),
\label{eq:appA_grad_decomp}
\end{align}}
and $\nabla_{\theta_{\mathrm{enc}}} C_1(\theta_{\mathrm{enc}})$ is generally non-zero because
$q_{\theta_{\mathrm{enc}}}(Z_t)$ depends on $\theta_{\mathrm{enc}}$ via Eq.~\eqref{eq:appA_qt_marginal}.
The same conclusion applies to other terms arising in ESM--DSM equivalences (often denoted $C_2$):
their constancy typically holds only when the underlying data distribution is fixed.
Therefore, replacing the intended regularizer by a simplified target that drops such terms changes the
optimization problem and may yield a mismatch in the training signal passed to the encoder.

\subsection{Implication for our objective}
\label{app:implication_our_objective}

To obtain a tractable objective that remains valid under joint training of the encoder, diffusion prior,
and decoder, we avoid relying on constant-term simplifications.
Instead, the main text upper bounds the marginal regularizer
\begin{equation}
\mathrm{KL}\!\left(q_{\theta_{\mathrm{enc}}}(Z_0\mid\mathbf{X})\,\|\,p_{\theta_{\mathrm{diff}}}(Z_0)\right)
\end{equation}
by a joint diffusion-path KL (Thm.~\ref{thm:marginal_joint_kl}),
which admits a diffusion-style decomposition (Sec.~\ref{sec:diff_reg}) and an efficient single-timestep estimator (Sec.~\ref{sec:efficient_diff}).

\section{Proofs for the diffusion KL surrogate}
\label{app:diffusion_kl_proofs}
\subsection{Proof of Theorem~\ref{thm:marginal_joint_kl}}
\label{app:proof_marginal_joint_kl}

For clarity, we prove a slightly more general statement and then apply it to
$Q_{\theta_{\mathrm{enc}}}(Z_{0:S}\mid\mathbf{X})$ and $P_{\theta_{\mathrm{diff}}}(Z_{0:S})$.

\begin{lemma}[Marginal-to-joint KL bound]
\label{lem:marginal_joint_general}
Let $(\Omega,\mathcal{F})$ be a measurable space and let $Q,P$ be probability measures on it.
Let $\pi:\Omega\to\mathcal{Z}$ be a measurable map, and denote the push-forward (marginal) measures
$Q_0 := \pi_{\#}Q$ and $P_0 := \pi_{\#}P$ on $(\mathcal{Z},\mathcal{B}(\mathcal{Z}))$.
Then
\begin{equation}
\mathrm{KL}(Q_0\|P_0)\;\le\;\mathrm{KL}(Q\|P).
\end{equation}
\end{lemma}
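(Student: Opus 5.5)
We prove Lemma~\ref{lem:marginal_joint_general} through the chain rule for KL divergence, using a disintegration of $Q$ and $P$ along the map $\pi$.

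First, the trivial case. If $Q\not\ll P$, then $\mathrm{KL}(Q\|P)=+\infty$ and there is nothing to prove. Otherwise $Q\ll P$. For any measurable $A\subseteq\mathcal{Z}$ with $P_0(A)=0$ we have $P(\pi^{-1}(A))=0$, hence $Q(\pi^{-1}(A))=0$, that is, $Q_0(A)=0$. So $Q_0\ll P_0$, and both Radon--Nikodym derivatives $f=dQ/dP$ and $g=dQ_0/dP_0$ exist.

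The key identity is that $g\circ\pi=\mathbb{E}_P[f\mid\sigma(\pi)]$ $P$-a.s. To verify it, take any $A$ and compute
\begin{equation*}
\int_{\pi^{-1}(A)} f\,dP \;=\; Q(\pi^{-1}(A)) \;=\; Q_0(A) \;=\; \int_A g\,dP_0 \;=\; \int_{\pi^{-1}(A)} g\circ\pi\,dP .
\end{equation*}
Since $g\circ\pi$ is $\sigma(\pi)$-measurable, this is exactly the defining property of the conditional expectation.

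Next, apply Jensen. The function $\phi(x)=x\log x$ is convex on $[0,\infty)$, so the conditional Jensen inequality gives
\begin{equation*}
\phi(g\circ\pi) \;\le\; \mathbb{E}_P[\phi(f)\mid\sigma(\pi)] \quad P\text{-a.s.}
\end{equation*}
Taking $P$-expectations yields
\begin{equation*}
\mathrm{KL}(Q_0\|P_0) \;=\; \int\phi(g)\,dP_0 \;=\; \int\phi(g\circ\pi)\,dP \;\le\; \int\phi(f)\,dP \;=\; \mathrm{KL}(Q\|P).
\end{equation*}

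The main obstacle is integrability. The function $\phi$ is bounded below by $-1/e$, so its negative part is always integrable and both integrals are well defined in $(-\infty,+\infty]$. The conditional Jensen inequality still holds for such quasi-integrable convex functions, for example by truncating $\phi$ and applying monotone convergence. This avoids any need for regular conditional distributions on a general space $\Omega$.

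An alternative route is the Donsker--Varadhan variational formula. Every bounded test function $h$ on $\mathcal{Z}$ lifts to $h\circ\pi$ on $\Omega$, so the supremum that defines $\mathrm{KL}(Q_0\|P_0)$ ranges over a subset of the test functions for $\mathrm{KL}(Q\|P)$.

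Finally, to get Theorem~\ref{thm:marginal_joint_kl}, take $\Omega$ to be the path space of $Z_{0:S}$ and $\pi(Z_{0:S})=Z_0$. The measures are $Q=Q_{\theta_{\mathrm{enc}}}(\cdot\mid\mathbf{X})$ from Eq.~\eqref{eq:aug_posterior} and $P=P_{\theta_{\mathrm{diff}}}$ from Eq.~\eqref{eq:diff_joint}. Their $Z_0$-marginals are $q_{\theta_{\mathrm{enc}}}(Z_0\mid\mathbf{X})$, because $q(Z_{1:S}\mid Z_0)$ integrates to one, and $p_{\theta_{\mathrm{diff}}}(Z_0)$, by the definition of the implicit prior.
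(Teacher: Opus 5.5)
Your proof is correct and follows essentially the same route as the paper: reduce to the case $Q\ll P$, identify $(dQ_0/dP_0)\circ\pi$ with $\mathbb{E}_P[dQ/dP\mid\sigma(\pi)]$, and apply conditional Jensen to $u\log u$ before integrating. The differences are cosmetic. You obtain $dQ_0/dP_0$ from the Radon--Nikodym theorem and then verify the conditional-expectation identity, whereas the paper defines the conditional expectation first and factors it through $\pi$; you also handle integrability via the bound $u\log u\ge -1/e$, whereas the paper first discards the case $\mathrm{KL}(Q\|P)=+\infty$.
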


\begin{proof}
\textbf{Step 0 (the case $Q\not\ll P$).}
If $Q\not\ll P$, then by the standard convention $\mathrm{KL}(Q\|P)=+\infty$.
Since $\mathrm{KL}(Q_0\|P_0)\ge 0$ (possibly $+\infty$), the inequality holds trivially:
$\mathrm{KL}(Q_0\|P_0)\le \mathrm{KL}(Q\|P)=+\infty$.
Hence, it suffices to consider the case $Q\ll P$.

\textbf{Step 1 (Radon--Nikodym derivative of the joint).}
Assume $Q\ll P$ and let
\begin{equation}
f := \frac{dQ}{dP}.
\end{equation}
Then $f\ge 0$ $P$-a.s.\ and $\mathbb{E}_{P}[f]=\int_{\Omega} f\,dP = Q(\Omega)=1$.

\textbf{Step 2 (Radon--Nikodym derivative of the marginal).}
Let $\mathcal{G}:=\sigma(\pi)$ be the $\sigma$-algebra generated by $\pi$ and define
\begin{equation}
g := \mathbb{E}_{P}[f\mid \mathcal{G}].
\end{equation}
Then $g$ is $\mathcal{G}$-measurable. Hence there exists a measurable function
$\tilde g:\mathcal{Z}\to[0,\infty)$ such that
\begin{equation}
g(\omega)=\tilde g(\pi(\omega))\quad P\text{-a.s.}
\end{equation}

We claim that $Q_0\ll P_0$ and that $\tilde g$ is the Radon--Nikodym derivative of $Q_0$ w.r.t.\ $P_0$, i.e.,
\begin{equation}
\frac{dQ_0}{dP_0}(z)=\tilde g(z)\quad P_0\text{-a.s.}
\label{eq:rn_marginal}
\end{equation}
To show this, take any measurable set $A\subseteq\mathcal{Z}$. By definition of push-forward,
\begin{equation}
\begin{aligned}
Q_0(A)
&= Q(\pi\in A)
= \int_{\Omega}\mathbf{1}\{\pi(\omega)\in A\}\,dQ(\omega) \\
&= \int_{\Omega}\mathbf{1}\{\pi(\omega)\in A\}\,f(\omega)\,dP(\omega).
\end{aligned}
\end{equation}
Since $\mathbf{1}\{\pi(\omega)\in A\}$ is $\mathcal{G}$-measurable, the tower property gives
\begin{equation}
\begin{aligned}
\int_{\Omega}\mathbf{1}\{\pi\in A\}\,f\,dP
&=
\int_{\Omega}\mathbf{1}\{\pi\in A\}\,\mathbb{E}_{P}[f\mid\mathcal{G}]\,dP \\
&=
\int_{\Omega}\mathbf{1}\{\pi\in A\}\,g(\omega)\,dP(\omega).
\end{aligned}
\end{equation}
Using $g(\omega)=\tilde g(\pi(\omega))$ $P$-a.s.\ and the change-of-variables identity for push-forward measures,
$\int_{\Omega}\varphi(\pi(\omega))\,dP(\omega)=\int_{\mathcal{Z}}\varphi(z)\,dP_0(z)$, we obtain
\begin{equation}
\begin{aligned}
Q_0(A)
&=\int_{\Omega}\mathbf{1}\{\pi\in A\}\,\tilde g(\pi(\omega))\,dP(\omega)
=\int_{\mathcal{Z}}\mathbf{1}\{z\in A\}\,\tilde g(z)\,dP_0(z),
\end{aligned}
\end{equation}
which is exactly \eqref{eq:rn_marginal}.

\textbf{Step 3 (conclude via conditional Jensen).}
If $\mathrm{KL}(Q\|P)=\int f\log f\,dP=+\infty$, then the desired inequality holds trivially.
Assume henceforth that $\int f\log f\,dP<+\infty$.

Let $\varphi(u)=u\log u$ for $u\ge 0$, which is convex. Since $g=\mathbb{E}_{P}[f\mid\mathcal{G}]$,
conditional Jensen's inequality yields
\begin{equation}
\varphi(g)
=
\varphi\!\left(\mathbb{E}_{P}[f\mid\mathcal{G}]\right)
\le
\mathbb{E}_{P}[\varphi(f)\mid\mathcal{G}]
\qquad P\text{-a.s.}
\end{equation}
Taking expectation under $P$ gives
\begin{equation}
\mathbb{E}_{P}[g\log g]\le \mathbb{E}_{P}[f\log f].
\end{equation}
By \eqref{eq:rn_marginal} and $g(\omega)=\tilde g(\pi(\omega))$ $P$-a.s., we have
\begin{equation}
\begin{aligned}
\mathrm{KL}(Q_0\|P_0)
&=
\int_{\mathcal{Z}} \tilde g(z)\log \tilde g(z)\,dP_0(z) \\
&=
\int_{\Omega} g(\omega)\log g(\omega)\,dP(\omega)
=
\mathbb{E}_{P}[g\log g].
\end{aligned}
\end{equation}
Similarly,
\begin{equation}
\mathrm{KL}(Q\|P)=\int_{\Omega} f\log f\,dP=\mathbb{E}_{P}[f\log f].
\end{equation}
Therefore,
\begin{equation}
\mathrm{KL}(Q_0\|P_0)\le \mathrm{KL}(Q\|P),
\end{equation}
which proves Lemma~\ref{lem:marginal_joint_general}.
\end{proof}

\paragraph{Applying Lemma~\ref{lem:marginal_joint_general} to Theorem~\ref{thm:marginal_joint_kl}.}
In our setting, let $\Omega$ be the space of diffusion paths $Z_{0:S}$ and let $\pi(Z_{0:S})=Z_0$.
Take $Q=Q_{\theta_{\mathrm{enc}}}(Z_{0:S}\mid\mathbf{X})$ and $P=P_{\theta_{\mathrm{diff}}}(Z_{0:S})$.
Then $Q_0=q_{\theta_{\mathrm{enc}}}(Z_0\mid\mathbf{X})$ and $P_0=p_{\theta_{\mathrm{diff}}}(Z_0)$, and
Lemma~\ref{lem:marginal_joint_general} gives Eq.~\eqref{eq:ldiff_upper}.

\subsection{Proof of the Decomposition of $\widetilde{\mathcal{L}}_{\mathrm{diff}}(\mathbf{X})$}
\label{app:proof_C2}

In this appendix we prove the following statement :

\begin{proposition}[Joint-KL identity]
\label{prop:joint_kl_identity}
Let
\begin{equation}
Q_{\theta_{\mathrm{enc}}}(Z_{0:S}\mid \mathbf{X})
:= q_{\theta_{\mathrm{enc}}}(Z_0\mid \mathbf{X})\, q(Z_{1:S}\mid Z_0),
P_{\theta_{\mathrm{diff}}}(Z_{0:S})
:= p_{\theta_{\mathrm{diff}}}(Z_{0:S}),
\end{equation}
and define the diffusion ELBO for a fixed $Z_0$ as
\begin{equation}
\mathrm{ELBO}_{\theta_{\mathrm{diff}}}(Z_0)
:= \mathbb{E}_{q(Z_{1:S}\mid Z_0)}
\big[\log p_{\theta_{\mathrm{diff}}}(Z_{0:S})-\log q(Z_{1:S}\mid Z_0)\big].
\label{eq:diff_elbo_def_app}
\end{equation}
Then the joint KL admits the following identity:
\begin{align}
\widetilde{\mathcal{L}}_{\mathrm{diff}}(\mathbf{X})
:&= \mathrm{KL}\!\left(Q_{\theta_{\mathrm{enc}}}(\cdot\mid \mathbf{X}) \,\|\, P_{\theta_{\mathrm{diff}}}(\cdot)\right)
\\&=
\mathbb{E}_{q_{\theta_{\mathrm{enc}}}(Z_0\mid \mathbf{X})}
\Big[\log q_{\theta_{\mathrm{enc}}}(Z_0\mid \mathbf{X})
- \mathrm{ELBO}_{\theta_{\mathrm{diff}}}(Z_0)\Big].
\label{eq:joint_kl_identity_app}
\end{align}
\end{proposition}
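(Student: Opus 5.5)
The plan is to expand the joint KL directly from its definition as an expectation of a log-density ratio and then factor the augmented posterior. Assume $Q_{\theta_{\mathrm{enc}}}\ll P_{\theta_{\mathrm{diff}}}$; this holds here because all forward and reverse transitions are nondegenerate Gaussians, so both joints have everywhere-positive Lebesgue densities on $(\mathbb{R}^{M\times K})^{S+1}$. Otherwise both sides equal $+\infty$ under the usual conventions. We then write
\begin{equation*}
\widetilde{\mathcal{L}}_{\mathrm{diff}}(\mathbf{X})
=
\mathbb{E}_{Q_{\theta_{\mathrm{enc}}}(Z_{0:S}\mid\mathbf{X})}
\Big[\log q_{\theta_{\mathrm{enc}}}(Z_0\mid\mathbf{X}) + \log q(Z_{1:S}\mid Z_0) - \log p_{\theta_{\mathrm{diff}}}(Z_{0:S})\Big],
\end{equation*}
using the product definition of $Q_{\theta_{\mathrm{enc}}}$ in Eq.~\eqref{eq:aug_posterior} to split the log-density of the numerator.

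Next, disintegrate the expectation with the tower property, following the same factorization: $\mathbb{E}_{Q}[\cdot]=\mathbb{E}_{q_{\theta_{\mathrm{enc}}}(Z_0\mid\mathbf{X})}\,\mathbb{E}_{q(Z_{1:S}\mid Z_0)}[\cdot]$. The first term depends only on $Z_0$, so the inner expectation leaves it unchanged. The remaining two terms, under the inner expectation, give exactly $-\mathrm{ELBO}_{\theta_{\mathrm{diff}}}(Z_0)$ as defined in Eq.~\eqref{eq:diff_elbo_def_app}. Collecting these pieces gives Eq.~\eqref{eq:joint_kl_identity_app}.

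The main obstacle is justifying that the expectation can be split into separate terms, i.e.\ avoiding an $\infty-\infty$ situation. I would handle this by checking integrability. The encoder posterior is Gaussian with fixed $\sigma_{\mathrm{enc}}$, so $\log q_{\theta_{\mathrm{enc}}}(Z_0\mid\mathbf{X})$ is a quadratic function with finite mean; in fact it equals the negative Gaussian entropy, a constant in $\boldsymbol{\ell}$. The terms $\log q(Z_{1:S}\mid Z_0)$ and $\log p_{\theta_{\mathrm{diff}}}(Z_{0:S})$ are quadratic in Gaussian variables, with reverse means given by the denoiser. Their expectations are finite provided $\epsilon_{\theta_{\mathrm{diff}}}$ has at most linear growth, which holds for an MLP with Lipschitz activations. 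Once every term is integrable, Fubini/Tonelli justifies the iterated expectation and linearity. If some term fails to be integrable, I would state the identity in the extended-real sense under the convention that the KL equals $+\infty$.

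As a remark, the identity combined with $\mathbb{E}_{q_{\theta_{\mathrm{enc}}}}[\log q_{\theta_{\mathrm{enc}}}]=-\mathrm{H}(q_{\theta_{\mathrm{enc}}})$ recovers Eq.~\eqref{eq:ddpm_elbo_decomp} up to the entropy term. That term is constant because $\sigma_{\mathrm{enc}}$ is fixed, which I would note to reconcile this identity with the form used in the main text.
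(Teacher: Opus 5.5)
Your proposal is correct and follows the same route as the paper: you expand the KL using the product form of $Q_{\theta_{\mathrm{enc}}}$, apply the tower property over $Z_0\sim q_{\theta_{\mathrm{enc}}}(Z_0\mid\mathbf{X})$ and then $Z_{1:S}\sim q(Z_{1:S}\mid Z_0)$, and identify the inner expectation as $-\mathrm{ELBO}_{\theta_{\mathrm{diff}}}(Z_0)$. Two of your additions are accurate and go beyond the paper: the integrability justification, which the paper omits, and the observation that Eq.~\eqref{eq:ddpm_elbo_decomp} drops the term $\mathbb{E}_{q_{\theta_{\mathrm{enc}}}}[\log q_{\theta_{\mathrm{enc}}}]=-\mathrm{H}(q_{\theta_{\mathrm{enc}}})$, which is constant only because $\sigma_{\mathrm{enc}}$ is fixed.
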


\begin{proof}
By definition of KL divergence and the Radon--Nikodym derivative,
\begin{align}
\widetilde{\mathcal{L}}_{\mathrm{diff}}(\mathbf{X})
= &\mathbb{E}_{Q_{\theta_{\mathrm{enc}}}(Z_{0:S}\mid \mathbf{X})}
\left[
\log \frac{Q_{\theta_{\mathrm{enc}}}(Z_{0:S}\mid \mathbf{X})}{P_{\theta_{\mathrm{diff}}}(Z_{0:S})}
\right] \nonumber\\
=& \mathbb{E}_{q_{\theta_{\mathrm{enc}}}(Z_0\mid \mathbf{X})\,q(Z_{1:S}\mid Z_0)}
[
\log q_{\theta_{\mathrm{enc}}}(Z_0\mid \mathbf{X}) \\
& +\log q(Z_{1:S}\mid Z_0)
-\log p_{\theta_{\mathrm{diff}}}(Z_{0:S})
].
\label{eq:joint_kl_expand_app}
\end{align}
Now apply the tower property (iterated expectation) with respect to
$Z_0\sim q_{\theta_{\mathrm{enc}}}(Z_0\mid \mathbf{X})$ and
$Z_{1:S}\sim q(Z_{1:S}\mid Z_0)$:
\begin{align}
\widetilde{\mathcal{L}}_{\mathrm{diff}}(\mathbf{X})
=&
\mathbb{E}_{q_{\theta_{\mathrm{enc}}}(Z_0\mid \mathbf{X})}
[
\mathbb{E}_{q(Z_{1:S}\mid Z_0)}
(
\log q_{\theta_{\mathrm{enc}}}(Z_0\mid \mathbf{X})\\
&+\log q(Z_{1:S}\mid Z_0)
-\log p_{\theta_{\mathrm{diff}}}(Z_{0:S})
)
] \nonumber\\
=&
\mathbb{E}_{q_{\theta_{\mathrm{enc}}}(Z_0\mid \mathbf{X})}
[
\log q_{\theta_{\mathrm{enc}}}(Z_0\mid \mathbf{X})\\
&+
\mathbb{E}_{q(Z_{1:S}\mid Z_0)}
(
\log q(Z_{1:S}\mid Z_0)
-\log p_{\theta_{\mathrm{diff}}}(Z_{0:S})
)
] \nonumber\\
=&
\mathbb{E}_{q_{\theta_{\mathrm{enc}}}(Z_0\mid \mathbf{X})}
[
\log q_{\theta_{\mathrm{enc}}}(Z_0\mid \mathbf{X})\\
&-
\mathbb{E}_{q(Z_{1:S}\mid Z_0)}
(
\log p_{\theta_{\mathrm{diff}}}(Z_{0:S})-\log q(Z_{1:S}\mid Z_0)
)
].
\end{align}
The inner expectation is exactly $\mathrm{ELBO}_{\theta_{\mathrm{diff}}}(Z_0)$ defined in
Eq.~\eqref{eq:diff_elbo_def_app}, which yields Eq.~\eqref{eq:joint_kl_identity_app}.
\end{proof}

\subsection{Proof of the Diff-prior ELBO decomposition}
\label{app:proof_C3}

We prove the standard per-step decomposition.
Assume the diffusion joint is
\begin{equation}
p_{\theta_{\mathrm{diff}}}(Z_{0:S})
=
p(Z_S)\prod_{s=1}^{S} p_{\theta_{\mathrm{diff}}}(Z_{s-1}\mid Z_s),
\qquad
p(Z_S)=\mathcal{N}(0,\mathbf{I}),
\label{eq:reverse_chain_app}
\end{equation}
and the forward noising process is Markov:
\begin{equation}
q(Z_{1:S}\mid Z_0)=\prod_{s=1}^{S} q(Z_s\mid Z_{s-1}).
\label{eq:forward_chain_app}
\end{equation}

\begin{theorem}[Per-step KL decomposition of $-\mathrm{ELBO}$]
\label{thm:ddpm_elbo_decomp_app}
For any fixed $Z_0$, the negative ELBO in Eq.~\eqref{eq:diff_elbo_def_app} satisfies
\begin{align}
-\mathrm{ELBO}&_{\theta_{\mathrm{diff}}}(Z_0)
=
\underbrace{\mathrm{KL}\!\left(q(Z_S\mid Z_0)\,\|\,p(Z_S)\right)}_{L_S(Z_0)}\\
&+
\sum_{s=2}^{S}
\underbrace{\mathbb{E}_{q(Z_s\mid Z_0)}
\mathrm{KL}\!\left(q(Z_{s-1}\mid Z_s, Z_0)\,\|\,p_{\theta_{\mathrm{diff}}}(Z_{s-1}\mid Z_s)\right)}_{L_{s-1}(Z_0)}\\
&+
\underbrace{\mathbb{E}_{q(Z_1\mid Z_0)}\big[-\log p_{\theta_{\mathrm{diff}}}(Z_0\mid Z_1)\big]}_{L_0(Z_0)}.
\label{eq:elbo_decomp_app}
\end{align}
\end{theorem}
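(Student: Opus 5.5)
The plan is to follow the standard DDPM argument (Ho et al.). Start from the definition of $\mathrm{ELBO}_{\theta_{\mathrm{diff}}}(Z_0)$ in Eq.~\eqref{eq:diff_elbo_def_app} and substitute the factorizations \eqref{eq:reverse_chain_app} and \eqref{eq:forward_chain_app}. This gives
\begin{equation*}
-\mathrm{ELBO}_{\theta_{\mathrm{diff}}}(Z_0)=\mathbb{E}_{q(Z_{1:S}\mid Z_0)}\Big[-\log p(Z_S)-\sum_{s=1}^{S}\log\frac{p_{\theta_{\mathrm{diff}}}(Z_{s-1}\mid Z_s)}{q(Z_s\mid Z_{s-1})}\Big].
\end{equation*}
I will split off the $s=1$ term, which is $-\log p_{\theta_{\mathrm{diff}}}(Z_0\mid Z_1)+\log q(Z_1\mid Z_0)$, and treat $s=2,\dots,S$ separately.

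For $s\ge 2$, I will rewrite the forward transition with Bayes' rule, conditioning on $Z_0$. The Markov property of \eqref{eq:forward_chain_app} gives $q(Z_s\mid Z_{s-1})=q(Z_s\mid Z_{s-1},Z_0)$. Bayes then gives
\begin{equation*}
q(Z_s\mid Z_{s-1},Z_0)=\frac{q(Z_{s-1}\mid Z_s,Z_0)\,q(Z_s\mid Z_0)}{q(Z_{s-1}\mid Z_0)}.
\end{equation*}
Summed over $s=2,\dots,S$, the ratios $q(Z_s\mid Z_0)/q(Z_{s-1}\mid Z_0)$ telescope to $q(Z_S\mid Z_0)/q(Z_1\mid Z_0)$. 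The $\log q(Z_1\mid Z_0)$ from this telescoping cancels the one split off from the $s=1$ term. What remains inside the expectation is
\begin{equation*}
\log\frac{q(Z_S\mid Z_0)}{p(Z_S)}+\sum_{s=2}^{S}\log\frac{q(Z_{s-1}\mid Z_s,Z_0)}{p_{\theta_{\mathrm{diff}}}(Z_{s-1}\mid Z_s)}-\log p_{\theta_{\mathrm{diff}}}(Z_0\mid Z_1).
\end{equation*}

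Next I take the expectation under $q(Z_{1:S}\mid Z_0)$ term by term, marginalizing out the variables each term does not depend on.
\begin{itemize}
\item The first term depends only on $Z_S$, so its expectation is $\mathrm{KL}(q(Z_S\mid Z_0)\,\|\,p(Z_S))=L_S$.
\item The $s$-th summand depends on $(Z_{s-1},Z_s)$. The joint law of these two under $q(\cdot\mid Z_0)$ factors as $q(Z_s\mid Z_0)\,q(Z_{s-1}\mid Z_s,Z_0)$. The inner expectation is therefore the KL in $L_{s-1}$, averaged over $q(Z_s\mid Z_0)$.
\item The last term depends only on $Z_1$, so it gives $L_0$.
\end{itemize}

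The step needing the most care is this marginalization. I must justify that the $(Z_{s-1},Z_s)$-marginal of the forward chain conditioned on $Z_0$ factors as stated. This is immediate from the joint Gaussian structure; it is also valid for any Markov chain by the chain rule. I also need integrability so that the expectation of the sum can be split into a sum of expectations. For Gaussian transitions with positive variances all terms are finite quadratic expectations, so this holds. If needed, I will state a finiteness assumption on each $L_s$ to cover degenerate schedules.
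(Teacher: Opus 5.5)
Your proposal is correct and follows essentially the same route as the paper's proof. Both expand the ELBO via the two chain factorizations, rewrite the $s\ge 2$ forward transitions by Bayes' rule conditioned on $Z_0$, telescope so that the $\log q(Z_1\mid Z_0)$ terms cancel, and identify each remaining term as $L_S$, $L_{s-1}$, or $L_0$ by marginalizing. Your explicit uses of the Markov property (to write $q(Z_s\mid Z_{s-1})=q(Z_s\mid Z_{s-1},Z_0)$) and of integrability (to split the expectation of the sum) are small rigor additions that the paper leaves implicit.
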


\begin{proof}
Start from the definition of the ELBO:
\begin{align}
-\mathrm{ELBO}_{\theta_{\mathrm{diff}}}(Z_0)
=
\mathbb{E}_{q(Z_{1:S}\mid Z_0)}&
\Big[
-\log p_{\theta_{\mathrm{diff}}}(Z_{0:S})
+\log q(Z_{1:S}\mid Z_0)
\Big] \nonumber\\
=
\mathbb{E}_{q(Z_{1:S}\mid Z_0)}&
[
-\log p(Z_S)
-\sum_{s=1}^{S}\log p_{\theta_{\mathrm{diff}}}(Z_{s-1}\mid Z_s)\\
&+\sum_{s=1}^{S}\log q(Z_s\mid Z_{s-1})
].
\label{eq:neg_elbo_expand1_app}
\end{align}
Next, for each $s\ge 2$, apply Bayes' rule to the forward Markov chain:
\begin{equation}
q(Z_s\mid Z_{s-1})
=
\frac{q(Z_{s-1}\mid Z_s, Z_0)\,q(Z_s\mid Z_0)}{q(Z_{s-1}\mid Z_0)}.
\label{eq:bayes_forward_app}
\end{equation}
Taking logs and summing over $s=2,\dots,S$ gives a telescoping cancellation:
\begin{align}
\sum_{s=2}^{S}\log q(Z_s\mid Z_{s-1})
&=
\sum_{s=2}^{S}\log q(Z_{s-1}\mid Z_s, Z_0)\\
&+\log q(Z_S\mid Z_0)
-\log q(Z_1\mid Z_0).
\label{eq:telescope_app}
\end{align}
Substitute Eq.~\eqref{eq:telescope_app} into Eq.~\eqref{eq:neg_elbo_expand1_app} and separate the $s=1$ terms:
\begin{align}
-\mathrm{ELBO}_{\theta_{\mathrm{diff}}}(Z_0)
=&
\mathbb{E}_{q(Z_{1:S}\mid Z_0)}
\Big[
\log \frac{q(Z_S\mid Z_0)}{p(Z_S)}\\
&+
\sum_{s=2}^{S}
\log \frac{q(Z_{s-1}\mid Z_s, Z_0)}{p_{\theta_{\mathrm{diff}}}(Z_{s-1}\mid Z_s)}
-\log p_{\theta_{\mathrm{diff}}}(Z_0\mid Z_1)
\Big].
\label{eq:neg_elbo_expand2_app}
\end{align}
We now rewrite each term as a KL divergence:

(i) Terminal term.
Since the first term depends only on $Z_S$ (given $Z_0$),
\begin{align}
\mathbb{E}_{q(Z_{1:S}\mid Z_0)}
\left[\log \frac{q(Z_S\mid Z_0)}{p(Z_S)}\right]
&=
\mathbb{E}_{q(Z_S\mid Z_0)}
\left[\log \frac{q(Z_S\mid Z_0)}{p(Z_S)}\right]
\\&=
\mathrm{KL}\!\left(q(Z_S\mid Z_0)\,\|\,p(Z_S)\right).
\end{align}

(ii) Intermediate terms.
For each $s\ge 2$, the corresponding log-ratio depends on $(Z_{s-1},Z_s)$, and conditioning on $Z_s$
yields
\begin{align}
&\mathbb{E}_{q(Z_{1:S}\mid Z_0)}
\left[
\log \frac{q(Z_{s-1}\mid Z_s, Z_0)}{p_{\theta_{\mathrm{diff}}}(Z_{s-1}\mid Z_s)}
\right]\\
&=
\mathbb{E}_{q(Z_s\mid Z_0)}
\Bigg[
\mathbb{E}_{q(Z_{s-1}\mid Z_s, Z_0)}
\left[
\log \frac{q(Z_{s-1}\mid Z_s, Z_0)}{p_{\theta_{\mathrm{diff}}}(Z_{s-1}\mid Z_s)}
\right]
\Bigg] \nonumber\\
&=
\mathbb{E}_{q(Z_s\mid Z_0)}
\mathrm{KL}\!\left(q(Z_{s-1}\mid Z_s, Z_0)\,\|\,p_{\theta_{\mathrm{diff}}}(Z_{s-1}\mid Z_s)\right).
\end{align}

(iii) Reconstruction term.
The remaining term is
\begin{align}
\mathbb{E}_{q(Z_{1:S}\mid Z_0)}\big[-\log p_{\theta_{\mathrm{diff}}}(Z_0\mid Z_1)\big]
=
\mathbb{E}_{q(Z_1\mid Z_0)}\big[-\log p_{\theta_{\mathrm{diff}}}(Z_0\mid Z_1)\big].
\end{align}

Combining (i)--(iii) with Eq.~\eqref{eq:neg_elbo_expand2_app} yields the decomposition
in Eq.~\eqref{eq:elbo_decomp_app}.
\end{proof}
\subsection{ From per-step KL to the $\epsilon$-prediction objective}
\label{app:proof_C4}

This appendix shows that, under the standard Gaussian diffusion parameterization with fixed variances,
the per-step KL terms in Theorem~\ref{thm:ddpm_elbo_decomp_app} reduce (up to known weights and constants)
to a weighted noise-prediction regression loss, yielding Eq.~\eqref{eq:diff_train_loss}.
Our derivation follows the standard DDPM analysis. 

\paragraph{Forward process and closed-form marginals.}
Fix a variance schedule $\{\beta_s\}_{s=1}^S$ with $\alpha_s:=1-\beta_s$ and $\bar\alpha_s:=\prod_{r=1}^s \alpha_r$.
The forward Markov chain is
\begin{equation}
q(Z_s \mid Z_{s-1})=\mathcal{N}\!\left(\sqrt{\alpha_s}\,Z_{s-1},\; \beta_s \mathbf{I}\right).
\label{eq:appA4_forward}
\end{equation}
It admits the closed-form marginal
{\small
\begin{equation}
q(Z_s \mid Z_0)=\mathcal{N}\!\left(\sqrt{\bar\alpha_s}\,Z_0,\; (1-\bar\alpha_s)\mathbf{I}\right),
Z_s=\sqrt{\bar\alpha_s}\,Z_0+\sqrt{1-\bar\alpha_s}\,\epsilon,\;\epsilon\sim\mathcal{N}(0,\mathbf{I}),
\label{eq:appA4_marginal_reparam}
\end{equation}
}

which matches Eq.~\eqref{eq:zt_reparam} in the main text.

\paragraph{Forward posterior.}
The posterior of the forward process is also Gaussian:
\begin{equation}
q(Z_{s-1}\mid Z_s, Z_0)=\mathcal{N}\!\left(\tilde{\mu}_s(Z_s,Z_0),\;\tilde{\beta}_s \mathbf{I}\right),
\label{eq:appA4_forward_posterior}
\end{equation}
with
\begin{align}
\tilde{\beta}_s
&:=\frac{1-\bar\alpha_{s-1}}{1-\bar\alpha_s}\,\beta_s, \label{eq:appA4_beta_tilde}\\
\tilde{\mu}_s(Z_s,Z_0)
&:=\frac{\sqrt{\bar\alpha_{s-1}}\,\beta_s}{1-\bar\alpha_s}\,Z_0
+\frac{\sqrt{\alpha_s}\,(1-\bar\alpha_{s-1})}{1-\bar\alpha_s}\,Z_s.
\label{eq:appA4_mu_tilde}
\end{align}
These are standard identities obtained by Gaussian conditioning.

\paragraph{Reverse model and $\epsilon$-parameterization.}
Assume the learned reverse transition has fixed variance $\sigma_s^2 \mathbf{I}$:
\begin{equation}
p_{\theta_{\mathrm{diff}}}(Z_{s-1}\mid Z_s)=
\mathcal{N}\!\left(\mu_{\theta_{\mathrm{diff}}}(Z_s,s),\;\sigma_s^2 \mathbf{I}\right).
\label{eq:appA4_reverse_gaussian}
\end{equation}
Following DDPM, we parameterize the mean via a noise predictor $\epsilon_{\theta_{\mathrm{diff}}}(Z_s,s)$:
\begin{equation}
\mu_{\theta_{\mathrm{diff}}}(Z_s,s)
:=
\frac{1}{\sqrt{\alpha_s}}
\left(
Z_s - \frac{\beta_s}{\sqrt{1-\bar\alpha_s}}\,\epsilon_{\theta_{\mathrm{diff}}}(Z_s,s)
\right).
\label{eq:appA4_mu_eps_param}
\end{equation}
Moreover, using the reparameterization in Eq.~\eqref{eq:appA4_marginal_reparam}, one can rewrite
$\tilde{\mu}_s(Z_s,Z_0)$ in an equivalent ``noise form'':
\begin{equation}
\tilde{\mu}_s(Z_s,Z_0)
=
\frac{1}{\sqrt{\alpha_s}}
\left(
Z_s - \frac{\beta_s}{\sqrt{1-\bar\alpha_s}}\,\epsilon
\right),
\quad
\text{where } Z_s=\sqrt{\bar\alpha_s}Z_0+\sqrt{1-\bar\alpha_s}\epsilon.
\label{eq:appA4_mu_tilde_noise_form}
\end{equation}
(An explicit verification follows by substituting $Z_s$ into Eq.~\eqref{eq:appA4_mu_tilde} and simplifying.)

\paragraph{Key step: per-step KL becomes weighted MSE.}
Consider the intermediate term in Theorem~\ref{thm:ddpm_elbo_decomp_app}:
\begin{equation}
L_{s-1}(Z_0)
=
\mathbb{E}_{q(Z_s\mid Z_0)}
\mathrm{KL}\!\left(
q(Z_{s-1}\mid Z_s, Z_0)\,\|\,p_{\theta_{\mathrm{diff}}}(Z_{s-1}\mid Z_s)
\right).
\label{eq:appA4_L_sminus1_def}
\end{equation}
Since both distributions are Gaussians with isotropic covariances (Eqs.~\eqref{eq:appA4_forward_posterior}--\eqref{eq:appA4_reverse_gaussian}),
their KL divergence has the closed form
{\small
\begin{equation}
\mathrm{KL}\!\left(
\mathcal{N}(\tilde{\mu}_s,\tilde{\beta}_s\mathbf{I})
\,\|\,\mathcal{N}(\mu_{\theta_{\mathrm{diff}}},\sigma_s^2\mathbf{I})
\right)
=
\frac{1}{2\sigma_s^2}\bigl\|\tilde{\mu}_s-\mu_{\theta_{\mathrm{diff}}}\bigr\|_2^2
+\frac{d}{2}\!\left(\frac{\tilde{\beta}_s}{\sigma_s^2}-1-\log\frac{\tilde{\beta}_s}{\sigma_s^2}\right),
\label{eq:appA4_kl_gaussian}
\end{equation}
}

where $d$ is the dimensionality of $Z_{s-1}$. The second term does not depend on $\theta_{\mathrm{diff}}$.

Substituting Eqs.~\eqref{eq:appA4_mu_eps_param} and \eqref{eq:appA4_mu_tilde_noise_form} into the mean difference yields
\begin{equation}
\tilde{\mu}_s(Z_s,Z_0)-\mu_{\theta_{\mathrm{diff}}}(Z_s,s)
=
\frac{\beta_s}{\sqrt{\alpha_s}\sqrt{1-\bar\alpha_s}}
\left(\epsilon_{\theta_{\mathrm{diff}}}(Z_s,s)-\epsilon\right).
\label{eq:appA4_mean_diff}
\end{equation}
Therefore,
\begin{equation}
\frac{1}{2\sigma_s^2}\bigl\|\tilde{\mu}_s-\mu_{\theta_{\mathrm{diff}}}\bigr\|_2^2
=
w_s \,\bigl\|\epsilon-\epsilon_{\theta_{\mathrm{diff}}}(Z_s,s)\bigr\|_2^2,
\qquad
w_s:=\frac{\beta_s^2}{2\sigma_s^2\,\alpha_s\,(1-\bar\alpha_s)}.
\label{eq:appA4_weighted_mse}
\end{equation}
Plugging Eq.~\eqref{eq:appA4_weighted_mse} into Eq.~\eqref{eq:appA4_L_sminus1_def} and using the reparameterization
$Z_s=\sqrt{\bar\alpha_s}Z_0+\sqrt{1-\bar\alpha_s}\epsilon$ yields
\begin{equation}
L_{s-1}(Z_0)
=
\mathbb{E}_{\epsilon\sim\mathcal{N}(0,\mathbf{I})}\Big[
w_s \,\bigl\|\epsilon-\epsilon_{\theta_{\mathrm{diff}}}(Z_s,s)\bigr\|_2^2
\Big]
+\mathrm{const}(s),
\label{eq:appA4_L_sminus1_final}
\end{equation}
where $\mathrm{const}(s)$ collects all terms independent of $\theta_{\mathrm{diff}}$ (including the covariance-only term
in Eq.~\eqref{eq:appA4_kl_gaussian}).

\paragraph{From per-step terms to the training objective.}
Summing over $s$ and dropping $\theta_{\mathrm{diff}}$-independent constants gives a diffusion training objective
equivalent to minimizing a weighted noise regression over noise levels. In practice, we sample a single timestep
$s\sim\mathrm{Unif}(\{1,\ldots,S\})$ per iteration and optimize the Monte Carlo estimate
{\small
\begin{equation}
\widehat{\mathcal{L}}_{\mathrm{diff}}^{\mathrm{train}}(\mathbf{X})
=
\mathbb{E}_{Z_0\sim q_{\theta_{\mathrm{enc}}}(Z_0\mid\mathbf{X})}
\;
\mathbb{E}_{s\sim \mathrm{Unif}(\{1,\ldots,S\}),\,\epsilon\sim\mathcal{N}(0,\mathbf{I})}
\Big[
w_s\,
\bigl\|\epsilon-\epsilon_{\theta_{\mathrm{diff}}}(Z_s,s)\bigr\|_2^2
\Big],
\end{equation}
}

which is exactly Eq.~\eqref{eq:diff_train_loss}.
\qed
\subsection{ Unbiasedness of single-timestep Monte Carlo training}
\label{app:proof_C5}

This appendix justifies the single-timestep training strategy used in Sec.~\ref{sec:efficient_diff} by showing that
sampling one diffusion timestep per iteration yields an unbiased Monte Carlo estimator of the
multi-step diffusion objective. This aligns with the standard diffusion training practice of sampling
$t$ uniformly. 

\paragraph{Setup.}
Let $S$ be the number of diffusion steps and let $F_s(Z_0)$ denote any per-timestep loss term
(e.g., a per-step KL term or its equivalent noise-prediction form).
We consider two equivalent ways to write the diffusion objective:

\smallskip
\noindent
\textbf{(i) Sum form:}
\begin{equation}
\mathcal{J}_{\Sigma}(Z_0) := \sum_{s=1}^{S} F_s(Z_0).
\label{eq:A5_sum_obj}
\end{equation}
\textbf{(ii) Expectation form (uniform over steps):}
\begin{equation}
\mathcal{J}_{\mathbb{E}}(Z_0) := \mathbb{E}_{s\sim \mathrm{Unif}(\{1,\ldots,S\})}\!\big[F_s(Z_0)\big]
= \frac{1}{S}\sum_{s=1}^{S} F_s(Z_0).
\label{eq:A5_exp_obj}
\end{equation}
The two objectives differ only by a constant scaling:
$\mathcal{J}_{\Sigma}(Z_0)=S\,\mathcal{J}_{\mathbb{E}}(Z_0)$.

\paragraph{Unbiased estimator under uniform sampling.}
Let $\hat{s}\sim \mathrm{Unif}(\{1,\ldots,S\})$ be a single sampled timestep.
Define the single-sample estimators
\begin{equation}
\widehat{\mathcal{J}}_{\mathbb{E}}(Z_0) := F_{\hat{s}}(Z_0),
\qquad
\widehat{\mathcal{J}}_{\Sigma}(Z_0) := S\,F_{\hat{s}}(Z_0).
\label{eq:A5_uniform_estimators}
\end{equation}

\begin{lemma}[Unbiasedness under uniform sampling]
\label{lem:A5_unbiased_uniform}
For any fixed $Z_0$, the estimators in Eq.~\eqref{eq:A5_uniform_estimators} satisfy
\begin{equation}
\mathbb{E}_{\hat{s}}\big[\widehat{\mathcal{J}}_{\mathbb{E}}(Z_0)\big]=\mathcal{J}_{\mathbb{E}}(Z_0),
\qquad
\mathbb{E}_{\hat{s}}\big[\widehat{\mathcal{J}}_{\Sigma}(Z_0)\big]=\mathcal{J}_{\Sigma}(Z_0).
\end{equation}
\end{lemma}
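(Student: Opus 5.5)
The argument is a direct computation of the expectation over the discrete uniform law of $\hat{s}$. Throughout, $Z_0$ is held fixed, so each $F_s(Z_0)$ is a fixed (deterministic) real number. If $F_s$ itself involves an inner expectation over $\epsilon$, as in Eq.~\eqref{eq:appA4_L_sminus1_final}, it is still a fixed number once $Z_0$ is fixed. The only randomness is therefore in the index $\hat{s}$, which takes each value in $\{1,\ldots,S\}$ with probability $1/S$.

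\textbf{First identity.} By the definition of expectation for a discrete random variable,
\begin{equation*}
\mathbb{E}_{\hat{s}}\big[F_{\hat{s}}(Z_0)\big]=\sum_{s=1}^{S}\Pr(\hat{s}=s)\,F_s(Z_0)=\frac{1}{S}\sum_{s=1}^{S}F_s(Z_0).
\end{equation*}
The right-hand side is $\mathcal{J}_{\mathbb{E}}(Z_0)$ by Eq.~\eqref{eq:A5_exp_obj}.

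\textbf{Second identity.} The estimator $\widehat{\mathcal{J}}_{\Sigma}$ equals $S\,\widehat{\mathcal{J}}_{\mathbb{E}}$. By linearity of expectation,
\begin{equation*}
\mathbb{E}_{\hat{s}}\big[S\,F_{\hat{s}}(Z_0)\big]=S\,\mathcal{J}_{\mathbb{E}}(Z_0).
\end{equation*}
The right-hand side equals $\mathcal{J}_{\Sigma}(Z_0)$ by the scaling relation stated after Eq.~\eqref{eq:A5_exp_obj}.

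\textbf{Integrability.} The only possible subtlety is that the $F_s(Z_0)$ must be finite, so that the sums and the expectation are well defined. This holds because the $F_s$ are Gaussian KL terms or weighted squared errors, which are finite for a denoiser of finite second moment. The identities then hold pointwise in $Z_0$. Taking a further expectation over $Z_0\sim q_{\theta_{\mathrm{enc}}}(Z_0\mid\mathbf{X})$ and applying the tower property extends unbiasedness to the full objective in Eq.~\eqref{eq:diff_train_loss}.
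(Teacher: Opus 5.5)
Your proof is correct and matches the paper's argument: both compute $\mathbb{E}_{\hat{s}}[F_{\hat{s}}(Z_0)]=\frac{1}{S}\sum_{s=1}^{S}F_s(Z_0)$ directly from the uniform law, then multiply by $S$ using linearity to obtain the sum form. Your integrability remark is harmless but not needed, because for fixed $Z_0$ the expectation over $\hat{s}$ is just a finite weighted sum of the numbers $F_s(Z_0)$.
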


\begin{proof}
By definition of the uniform distribution and linearity of expectation,
\begin{equation}
\mathbb{E}_{\hat{s}}\big[F_{\hat{s}}(Z_0)\big]
=
\frac{1}{S}\sum_{s=1}^{S}F_s(Z_0)
=
\mathcal{J}_{\mathbb{E}}(Z_0),
\end{equation}
which proves the first identity. Multiplying both sides by $S$ yields
$\mathbb{E}_{\hat{s}}[S F_{\hat{s}}(Z_0)]=\sum_{s=1}^{S}F_s(Z_0)=\mathcal{J}_{\Sigma}(Z_0)$,
proving the second identity.
\end{proof}

\paragraph{General (importance) sampling over timesteps.}
More generally, let $p(s)$ be any probability distribution on $\{1,\ldots,S\}$ with $p(s)>0$ for all $s$.
If $\hat{s}\sim p(\cdot)$, then
\begin{equation}
\widehat{\mathcal{J}}_{\Sigma}^{\mathrm{IS}}(Z_0) := \frac{F_{\hat{s}}(Z_0)}{p(\hat{s})}
\label{eq:A5_is_estimator}
\end{equation}
is an unbiased estimator of the sum objective:
\begin{equation}
\mathbb{E}_{\hat{s}\sim p}\!\left[\widehat{\mathcal{J}}_{\Sigma}^{\mathrm{IS}}(Z_0)\right]
=
\sum_{s=1}^{S} p(s)\,\frac{F_s(Z_0)}{p(s)}
=
\sum_{s=1}^{S}F_s(Z_0)
=
\mathcal{J}_{\Sigma}(Z_0).
\end{equation}
Similarly, an unbiased estimator for the expectation objective in Eq.~\eqref{eq:A5_exp_obj} is
$\widehat{\mathcal{J}}_{\mathbb{E}}^{\mathrm{IS}}(Z_0):=\frac{1}{S}\frac{F_{\hat{s}}(Z_0)}{p(\hat{s})}$.

\paragraph{Implication for Sec.~\ref{sec:efficient_diff}.}
In Sec.~\ref{sec:efficient_diff} we define the diffusion training loss directly in expectation form over a uniformly sampled timestep,
which makes the single-timestep estimator unbiased by Lemma~\ref{lem:A5_unbiased_uniform}.
The same argument holds when $F_s(Z_0)$ is instantiated as the weighted $\epsilon$-prediction loss
derived in Appendix~\ref{app:proof_C4}, since the unbiasedness depends only on the timestep sampling and not on the
specific functional form of $F_s$.

\section{Performance Metrics}
\label{app:met}

\begingroup
\setlength{\abovedisplayskip}{2pt}
\setlength{\belowdisplayskip}{2pt}
\setlength{\abovedisplayshortskip}{2pt}
\setlength{\belowdisplayshortskip}{2pt}

We report two uncertainty/reliability metrics. The average posterior entropy is the mean Shannon entropy of the predicted posterior over edges:
\vspace{-0.3em}
\begin{equation}
\bar{H}
=
\frac{1}{|\mathcal{E}|}\sum_{e\in\mathcal{E}} H(p_e),
\qquad
H(p_e)
=
-\sum_{y\in\mathcal{Y}} p_e(y)\log p_e(y).
\label{eq:posterior_entropy}
\end{equation}
\vspace{-0.6em}

Lower $\bar{H}$ indicates more concentrated, and thus more certain, posteriors. 
We also compute the expected calibration error (ECE) with $B=10$ confidence bins:
\vspace{-0.3em}
\begin{equation}
\begin{aligned}
\mathrm{ECE}
&=
\sum_{b=1}^{B}
\frac{|I_b|}{N}
\left|
\mathrm{Acc}(I_b)-\mathrm{Conf}(I_b)
\right|, \\
\mathrm{Acc}(I_b)
&=
\frac{1}{|I_b|}
\sum_{i\in I_b}\mathbf{1}(\hat{y}_i=y_i),
\qquad
\mathrm{Conf}(I_b)
=
\frac{1}{|I_b|}
\sum_{i\in I_b}\hat{c}_i .
\end{aligned}
\label{eq:ece}
\end{equation}
\vspace{-0.6em}

Here, $I_b$ contains samples whose confidence $\hat{c}_i=\max_y p_i(y)$ falls in bin $b$. 
Lower ECE indicates better calibration.

\begin{table}[h]
\centering
\caption{Comparison between MLP and Transformer denoisers in the diffusion module. Results are reported as AUROC (\%). Bold indicates the better result under the same dynamics and dataset.}
\label{tab:denoiser_arch}
\setlength{\tabcolsep}{4pt}
\begin{tabular}{llrrrrr}
\hline\hline
Dynamics & Denoiser & VN & FW & BN & GRN & CRNA \\
\midrule
SP & MLP    & \textbf{95.17} & \textbf{81.51} & 99.52          & 90.58          & \textbf{85.65} \\
SP & Trans. & 94.75          & 81.17          & \textbf{99.60} & \textbf{91.41} & 84.78          \\
\midrule
NS & MLP    & 91.50          & \textbf{55.73} & 99.75          & 77.38          & \textbf{51.80} \\
NS & Trans. & \textbf{95.31} & 51.35          & \textbf{99.79} & \textbf{82.55} & 50.77          \\
\hline\hline
\end{tabular}
\end{table}

\section{Diffusion hyperparameters}
\label{app4}
We adopt an edge-wise DDPM prior over the encoder logits and set the diffusion horizon to $S{=}100$.
We use a linear noise schedule with $\beta_s$ linearly interpolated from $10^{-4}$ to $2\times 10^{-2}$.
The denoiser $\epsilon_\theta(\mathbf{z}_s,s)$ is implemented as a three-layer MLP that takes the concatenation of $\mathbf{z}_s\in\mathbb{R}^{K}$ and a sinusoidal time embedding of dimension $64$ as input, uses two hidden layers of width $128$ with SiLU activations and dropout $0.3$, and outputs a prediction in $\mathbb{R}^{K}$.
During training, we sample diffusion steps uniformly from $\{2,\ldots,s_{\max}\}$, averaging the resulting weighted noise-prediction objective.
Following the benchmark setting, we use $\sigma_s^2{=}\beta_s$ when constructing the per-step weight.
For the decoder input, we apply a decisive one-step residual refinement at $s_{\mathrm{ref}}{=}30$ with residual scale $\gamma{=}0.1$, using fixed Gaussian noise; we additionally clip the refined prediction with $\lVert \hat{\mathbf{z}}_0\rVert_\infty \le 10$ for stability.
To further examine the effect of the denoiser architecture in the diffusion module, we additionally replace the default MLP denoiser with a Transformer-based denoiser and report the comparison in Table~\ref{tab:denoiser_arch}. 
The two denoiser choices achieve comparable structure inference performance, while the Transformer variant introduces higher computational cost due to its attention operations. 
Therefore, we use the MLP denoiser as the default implementation in our main experiments.

{\small
\begin{table}[h]
\setlength{\tabcolsep}{2pt}
\caption{Training time per epoch (in seconds) under different graph priors (Uniform, Fixed, and Diff-prior) across five datasets.}
\label{app:runtime_table}
\begin{tabular}{cccccccc}
\hline\hline
\multicolumn{8}{c}{\textbf{Spring}}\\ \hline
\multicolumn{1}{c|}{\textbf{Model}}                & \multicolumn{1}{c|}{\textbf{prior}}                    & \textbf{VN\_15}      & \textbf{FW\_15}      & \textbf{BN\_15}      & \textbf{GRN\_15}     & \multicolumn{1}{c|}{\textbf{CRNA\_15}}    & \textbf{AVG}           \\ \hline
\multicolumn{1}{c|}{\multirow{3}{*}{\textbf{NRI}}} & \multicolumn{1}{c|}{\textbf{Uniform}}                  & 133                  & 131                  & 93                   & 96                   & \multicolumn{1}{c|}{62}                   & 103                    \\
\multicolumn{1}{c|}{}                              & \multicolumn{1}{c|}{\textbf{Fixed}}                    & 124                  & 137                  & 124                  & 98                   & \multicolumn{1}{c|}{82}                   & 113                    \\
\multicolumn{1}{c|}{}                              & \multicolumn{1}{c|}{\textbf{Diff-prior}}               & 175                  & 104                  & 112                  & 114                  & \multicolumn{1}{c|}{85}                   & 118                    \\ \hline
\multicolumn{1}{c|}{\multirow{3}{*}{\textbf{ACD}}} & \multicolumn{1}{c|}{\textbf{Uniform}}                  & 175                  & 82                   & 115                  & 189                  & \multicolumn{1}{c|}{111}                  & 134.4                  \\
\multicolumn{1}{c|}{}                              & \multicolumn{1}{c|}{\textbf{Fixed}}                    & 92                   & 119                  & 91                   & 91                   & \multicolumn{1}{c|}{89}                   & 96.4                   \\
\multicolumn{1}{c|}{}                              & \multicolumn{1}{c|}{\textbf{Diff-prior}}               & 115                  & 97                   & 101                  & 98                   & \multicolumn{1}{c|}{102}                  & 102.6                  \\ \hline
\multicolumn{1}{c|}{\multirow[c]{2}{*}{\textbf{MPM}}} & \multicolumn{1}{c|}{{\textbf{Uniform}}} & {118} & {87}  & {165} & {88}  & \multicolumn{1}{c|}{{114}} & {114.4}     \\
\multicolumn{1}{c|}{}                              & \multicolumn{1}{c|}{\textbf{Diff-prior}}               & 179                  & 171                  & 104                  & 167                  & \multicolumn{1}{c|}{179}                  & 160                    \\ \hline\hline

\multicolumn{8}{c}{\textbf{Netsims}}\\ \hline
\multicolumn{1}{c|}{\textbf{Model}}                & \textbf{prior}                                         & \textbf{VN\_15}      & \textbf{FW\_15}      & \textbf{BN\_15}      & \textbf{GRN\_15}     & \textbf{CRNA\_15}                         & \textbf{AVG}           \\ \hline
\multicolumn{1}{c|}{\multirow{3}{*}{\textbf{NRI}}} & \multicolumn{1}{c|}{\textbf{Uniform}}                  & 101                  & 78                   & 110                  & 71                   & \multicolumn{1}{c|}{65}                   & 85                     \\
\multicolumn{1}{c|}{}                              & \multicolumn{1}{c|}{\textbf{Fixed}}                    & 106                  & 102                  & 77                   & 107                  & \multicolumn{1}{c|}{65}                   & 91.4                   \\
\multicolumn{1}{c|}{}                              & \multicolumn{1}{c|}{\textbf{Diff-prior}}               & 99                   & 100                  & 103                  & 102                  & \multicolumn{1}{c|}{78}                   & 96.4                   \\ \hline
\multicolumn{1}{c|}{\multirow{3}{*}{\textbf{ACD}}} & \multicolumn{1}{c|}{\textbf{Uniform}}                  & 85                   & 92                   & 83                   & 82                   & \multicolumn{1}{c|}{69}                   & 82.2                   \\
\multicolumn{1}{c|}{}                              & \multicolumn{1}{c|}{\textbf{Fixed}}                    & 101                  & 93                   & 91                   & 104                  & \multicolumn{1}{c|}{113}                  & 100.4                  \\
\multicolumn{1}{c|}{}                              & \multicolumn{1}{c|}{\textbf{Diff-prior}}               & 92                   & 87                   & 93                   & 153                  & \multicolumn{1}{c|}{89}                   & 102.8                  \\ \hline
\multicolumn{1}{c|}{\multirow[c]{2}{*}{\textbf{MPM}}} & \multicolumn{1}{c|}{{\textbf{Uniform}}} & {148} & {154} & {152} & {153} & \multicolumn{1}{c|}{{152}} & {151.8}     \\
\multicolumn{1}{c|}{}                              & \multicolumn{1}{c|}{\textbf{Diff-prior}}               & 109                  & 111                  & 152                  & 199                  & \multicolumn{1}{c|}{144}                  & 143                    \\ \hline\hline
\end{tabular}
\end{table}
}

\begin{table}[h]
\centering
\small
\caption{Runtime and memory comparison.}
\label{tab:runtime_memory}
\begin{tabular}{lrrrr}
\hline\hline
 & Uni-30 & Diff-30 & Uni-50 & Diff-50 \\
\midrule
Time (s/epoch) & 250 & 260 & 675 & 712 \\
Mem (GB)       & 4.65 & 4.74 & 12.73 & 12.80 \\
\hline\hline
\end{tabular}%
\end{table}

\section{Runtime Statistics}
\label{app:runtime}
Table~\ref{app:runtime_table} summarizes the per-epoch average training time for each baseline across three dynamic backbones and five network settings. 
Despite introducing additional parameters, Diff-prior does not incur noticeable training-time overhead compared to Uniform/Fixed, and exhibits comparable per-epoch runtime under identical hardware and training configurations.

We further report the runtime and memory usage under larger graph sizes in Table~\ref{tab:runtime_memory}. 
When increasing the number of nodes from 30 to 50, the absolute runtime and memory naturally increase because the number of candidate edges grows with graph size. 
However, the additional overhead introduced by Diff-prior remains marginal compared with the corresponding Uniform-prior baseline. 
For the 30-node setting, Diff-prior only increases the runtime from 250 to 260 seconds per epoch and the memory usage from 4.65 GB to 4.74 GB. 
For the 50-node setting, the runtime increases from 675 to 712 seconds per epoch, while the memory usage only changes from 12.73 GB to 12.80 GB. 
These results show that Diff-prior introduces only a slight extra runtime  extra memory cost overhead, even when the graph size becomes larger.

\section{Step-wise Performance on NRI}
\label{app:nri_steps}

Table~\ref{tab:nri_steps} reports the step-wise AUROC performance of the NRI backbone on the FW dataset under SP and NS dynamics. 
The purpose of this comparison is to examine whether multiple refinement steps are necessary for the diffusion-based calibration. 
For FW-SP, the best result is achieved at step-1, while the performances from step-1 to step-10 remain very close. 
For FW-NS, although step-3 gives the highest AUROC, the gap between step-1 and step-3 is relatively small compared with the improvement already obtained after the first refinement step. 
These results indicate that most of the calibration benefit can be achieved with a single refinement step, and using more steps only brings marginal additional gains. 
Therefore, in our main experiments, we adopt one-step refinement as the default setting, which preserves the effectiveness of Diff-prior while avoiding additional iterative computation and runtime overhead.

\begin{table}[h]
\centering
\caption{Step-wise performance comparison on NRI.}
\label{tab:nri_steps}
\begin{tabular}{lrrrrrr}
\hline\hline
 & NRI & step-0 & step-1 & step-3 & step-5 & step-10 \\
\midrule
FW-SP & 81.09 & 80.91 & \textbf{81.51} & 81.39 & 81.22 & 81.23 \\
FW-NS & 51.78 & 50.60 & 55.73 & \textbf{56.89} & 52.24 & 51.02 \\
\hline\hline
\end{tabular}
\end{table}

\end{document}